\author{Wenjie Yang}
\email{yangwj22@m.fudan.edu.cn}
\affiliation{%
  \institution{Fudan University}
  \city{Shanghai}
  \country{China}
}
\author{Shengzhong Zhang}
\email{szzhang17@fdu.edu.cn}
\affiliation{%
  \institution{Fudan University}
  \city{Shanghai}
  \country{China}
}
\author{Jiaxing Guo}
\email{jxguo24@m.fudan.edu.cn}
\affiliation{%
  \institution{Fudan University}
  \city{Shanghai}
  \country{China}
}
\author{Zengfeng Huang}
\email{huangzf@fudan.edu.cn}
\affiliation{%
  \institution{Fudan University}
  \city{Shanghai}
  \country{China}
}
\begin{document}


\title{Your Graph Recommender is Provably a Single-view Graph Contrastive Learning}

\renewcommand{\shortauthors}{Trovato et al.}

\newtheorem{assumption}[theorem]{Assumption}
\newtheorem{remark}[theorem]{Remark}

\newcommand{\adj}{\mathbf{A}}
\newcommand{\feat}{\mathbf{X}}
\newcommand{\sigmoid}{\sigma_\mathrm{s}}
\newcommand{\neighbor}{\mathcal{N}}
\begin{abstract} 
    Graph recommender (GR) is a type of graph neural network (GNNs) encoder that is customized for extracting information from the user-item interaction graph. Due to its strong performance on the recommendation task, GR has gained significant attention recently. Graph contrastive learning (GCL) is also a popular research direction that aims to learn, often unsupervised, GNNs with certain contrastive objectives. As a general graph representation learning method, GCLs have been widely adopted with the supervised recommendation loss for joint training of GRs. Despite the intersection of GR and GCL research, theoretical understanding of the relationship between the two fields is surprisingly sparse. This vacancy inevitably leads to inefficient scientific research.

    In this paper, we aim to bridge the gap between the field of GR and GCL from the perspective of encoders and loss functions. With mild assumptions, we theoretically show an astonishing fact that \emph{graph recommender is equivalent to a commonly-used single-view graph contrastive model}. Specifically, we find that (1) the classic encoder in GR is essentially a linear graph convolutional network with one-hot inputs, and (2) the loss function in GR is well bounded by a single-view GCL loss with certain hyperparameters. The first observation enables us to explain crucial designs of GR models, e.g., the removal of self-loop and nonlinearity. And the second finding can easily prompt many cross-field research directions. We empirically show a remarkable result that the recommendation loss and the GCL loss can be used interchangeably. The fact that we can train GR models solely with the GCL loss is particularly insightful, since before this work, GCLs were typically viewed as unsupervised methods that need fine-tuning. We also discuss some potential future works inspired by our theory.
\end{abstract}

\begin{CCSXML}
<ccs2012>
<concept>
<concept_id>10010147.10010257</concept_id>
<concept_desc>Computing methodologies~Machine learning</concept_desc>
<concept_significance>500</concept_significance>
</concept>
</ccs2012>
\end{CCSXML}

\ccsdesc[500]{Computing methodologies~Machine learning}

\keywords{Graph Neural Network, Graph Recommendation, Graph Contrastive Learning}


\maketitle

\section{Introduction}

Graph data are ubiquitous in real-world applications, with recommendation systems\cite{he2020lightgcn, wang2019ngcf, wu2021sgl, yu2022simgcl} serving as one of the most representative examples. In recommendation systems, users and items can be considered as nodes in a graph, and their interactions form the edges between these nodes. By modeling the use-item interactions in this way, the task of recommending user interests transforms into a prediction problem on graph.
Several early works are proposed to tackle this collaborative filtering task, such as SVD++ \cite{koren2008factorization} and Neural Attentive Item Similarity (NAIS) \cite{he2018nais}. In recent years, a type of powerful graph encoder called graph neural network has been applied to graph recommendation \cite{he2020lightgcn, wu2021sgl}. GNNs enable each node to gather information from its neighboring nodes through the message-passing operation, and then utilize the aggregated representations to learn downstream tasks. However, node features are often unavailable in the context of collaborative filtering. To address this issue and cater to personalized recommendations, LightGCN \cite{he2020lightgcn} is proposed, and soon becomes the most commonly used baseline model. Instead of transformation matrices, LightGCN first initializes a learnable embedding for each node. It then propagates these embeddings on the graph and computes the loss. Despite its effectiveness, we are surprised to find that the details of LightGCN lack sufficient theoretical explanations. For instance, \emph{what are the differences between LightGCN and other GNN encoders, e.g., \cite{kipf2016semi}? Why do the modifications made in LightGCN lead to its effectiveness in graph recommendation tasks?} In this paper, we show that these research questions can be rigorously analysed by a simple bridge between encoders.

Recent GR models also include graph contrastive learning to boost their performance \cite{wu2021sgl, yu2022simgcl, yu2023xsimgcl}. The primary idea of GCLs is to learn (often unsupervised) graph encoders by minimizing the distance between positive pairs and maximizing the distance between negative pairs, and it has been a popular research direction even outside the GR field \cite{zhang2020sce, zhu2021coles, zhu2020grace}. Despite this overlap between GCL and GR, research conducted on these two fields is rather parallel. For example, the scalability issue is studied on GCL by some works \cite{zheng2022ggd, zhang2023structcomp}, and is also studied on GR by some other works \cite{zhang2024linear, chen2024macro}. In this paper, we theoretically show some astonishing results that bridge the gap between GCL and GR. With these findings, we can easily identify redundancies in these two fields and avoid inefficient research efforts. Specifically, we show:

\textbf{\emph{Graph recommender is provably a single-view graph contrastive learning model.}}

This finding answers lots of research questions and brings novel insights. We can immediately know that the recommendation loss and the single-view GCL loss can be used interchangeably, which yields a new GCL method and a new way to train GR models. The fact that we can train GR models solely with the GCL loss is particularly interesting since GCLs were typically viewed as unsupervised methods in previous works. We anticipate that our results will lead to numerous novel research works. Research on GCL, such as scalability and negative sample mining, can be transferred to GR, and vice versa. Our contributions can be summarized as follows:

\begin{itemize}

    \item We carefully review the GR and GCL fields and present the application of GCL in GR. We include the training pipeline of both directions in our paper, making it self-contained and suitable as reading material for beginners.

    \item We demonstrate the equivalence between GR and GCL from the perspectives of both encoders and loss functions. This discovery not only provides theoretical explanations for the effectiveness of existing models but also deepens our understanding of both directions. Most importantly, based on our theory, research in GCL and GR can be mutually inspired by each other, and new directions can be prompted.

    \item We conduct extensive experiments to demonstrate the interchangeability of GCL and GR losses. The fact that training graph recommender models solely use GCL loss is remarkable because previous research often consider GCL as an unsupervised loss function that must be used in conjunction with downstream tasks. In the field of recommendation, GCL has been regarded as unsuitable even for pre-training. However, our theory easily explains why using GCL loss alone is sufficient for training graph recommender models.
\end{itemize}

\noindent\textbf{Outline.} In Section \ref{sec:rel}, we discuss the related works. To ensure the self-contained nature of this paper, we provide separate introductions to the algorithmic processes of GR and GCL for authors unfamiliar with these fields in Section \ref{sec:pre}. Section \ref{sec:theory} presents the main conclusions of the article, highlighting the establishment of connections between encoders and the proof of equivalence between losses. In Section \ref{sec:exp}, we demonstrate experimental results to further validate our theoretical findings and provide examples of new research directions.

\section{Related Work} \label{sec:rel}

\noindent\textbf{Graph Contrastive Learning.} Graph contrastive learning is a powerful tool for graph representation learning. There are a two lines of research on GCLs, namely single-view GCLs and multi-view GCLs. Similar to contrastive learning in the computer vision field, multi-view GCLs (e.g., \cite{zhu2020grace, zhu2021gca, zhang2021ccassg, zheng2022ggd, chen2023polygcl}) perform data augmentations to generate corrupted views of the target graph and consider synthetic nodes originating from the same node as positive pairs, while treating the remaining nodes as negative samples. GRACE \cite{zhu2020grace} is a representative multi-views GCL, it performs edge dropping and feature masking data augmentation, then choose negative pair from both inter-view and intra-view. GCA \cite{zhu2021gca} is an improved version of GRACE, which performs adaptive augmentation instead of the handcrafted ones. There are also works like CCA-SSG \cite{zhang2021ccassg}, which takes the perspective of canonical correlation analysis. Scalability issue has also been studied, GGD \cite{zheng2022ggd} use group discrimination instead of individual level. There are also more sophisticated model, like PolyGCL \cite{chen2023polygcl} that learns spectral polynomial filters.

With the graph information, one can also do single-view GCLs (e.g., \cite{hamilton2017sage, zhang2020sce, zhu2021coles, wang2022spgcl}), which consider adjacent nodes as positive pairs and unconnected nodes as negative samples. SCE \cite{zhang2020sce}, motivated by the classical sparsest cut problem, only uses negative samples from unconnected nodes. It is the first GCL work that does not rely on data augmentation. COLES \cite{zhu2021coles} further improves SCE by treating connected nodes as positive samples. Single-view GCLs are typically more efficient and less expensive to train compared to multi-views GCLs.

The original GCLs follow the pre-train then fine-tune paradigm, they first learn representations with the unsupervised contrastive learning, than fit a linear classifier for the downstream task. Many research works have been conducted on GCLs, including hard negative mining \cite{xia2022progcl} and scalability \cite{zhang2023structcomp}. With the connection established between GCL and GR, these studies can be easily applied to graph recommender systems.

\noindent\textbf{Graph Recommender.} Collaborative filtering (CF) is one of the most classical methods for recommender systems. The primary idea of CF is to filter out items that a user might like with user-item interactions. In recent years, GNNs have been incorporated into CF-based recommender system. These graph recommender is able to learn complex structural pattern, and they are more expressive than traditional CF. For example, an early work NGCF \cite{wang2019ngcf} performs message-passing on the user-item bipartite graph and trains embeddings with recommendation loss. Afterward, LightGCN \cite{he2020lightgcn} removes the non-linear activation functions and feature transformations in GR, simplifying the model while achieving promising performance. Despite LightGCN becoming the mainstream graph recommender, we are astonished to find that previous works have obtained minimal theoretical understanding regarding these simplifications.

Graph contrastive learning is also used in graph recommender. To the best of our knowledge, they all use multi-views GCLs. SGL \cite{wu2021sgl} is the first work to incorporate the InfoNCE \cite{gutmann2010infonce} loss into GR. Following SGL, SimGCL \cite{yu2023xsimgcl} is proposed to discard graph augmentation, achieving a more efficient model without sacrificing performance. Unlike traditional GCLs, the contrastive learning loss in these papers is learnt jointly with the downstream recommendation loss. There is also a work called CPTPP \cite{yang2024cptpp} that studies the new prompt-tuning framework. However, to the best of our knowledge, our paper is the first work that demonstrate the feasibility of training a viable GR model solely using GCL. ContraRec \cite{wang2023contrarec} is a recent work that also try to build connection between the recommendation loss and the contrastive learning loss. Unlike our work, ContraRec focus on the sequential recommendation, which results in a compromising contrastive loss that is not used in application. Furthermore, the experiments in ContraRec are conducted used an extended version of the recommendation loss, while our paper shows that training GR with off-the-shelf single-view GCLs is possible. We believe that the latter brings more insight, since GCLs were viewed as unsupervised losses in the past.
\section{Preliminaries} \label{sec:pre}

We consider the undirected graph $G=(\adj, \feat)$, where $\adj \in \{ 0, 1\}^{n \times n}$ is the adjacency matrix, and $\feat \in \mathbb{R}^{n\times f}$ is the feature matrix. For GR, the bipartite adjacency matrix can also be formulated as
\begin{equation}
    \adj = \begin{pmatrix}
\mathbf{0} & \mathbf{R} \\
\mathbf{R}^T & \mathbf{0}
\end{pmatrix},
\end{equation}
where $\mathbf{R}$ is the user-item interaction matrix. The feature matrix $\feat$ is not available in GR. The set of vertices and edges is represented as $V$ and $E$. We also denote $n=|V|$ and $m=|E|$. The degree of the $i$-th node is $d_i$, and the diagonal degree matrix is $\mathbf{D}$. The symmetrically normalized matrix is $\tilde{\adj}=\mathbf{D}^{-1/2}\adj \mathbf{D}^{-1/2}$, and the symmetrically normalized matrix with self-loop is $\hat{\adj}=(\mathbf{D}+\mathbf{I})^{-1/2}(\adj+\mathbf{I}) (\mathbf{D}+\mathbf{I})^{-1/2}$. The neighbor set of a node $x$ is denoted as $\neighbor_x$. In both GR and GCL, the negative samples are randomly selected from all unconnected nodes, we denote the negative sample set as $\neighbor_x^-$ and $K=|\neighbor_x^-|$. The Laplacian matrix of $\adj$ is $\mathbf{L}=\mathbf{D}-\adj$. We also have $\mathbf{L}^-$ as the randomly generated Laplacian matrices capturing the negative sampling.

\subsection{GCN}

Graph convolutional network (GCN) \cite{kipf2016semi} is the most common GNN encoder. Given the node representations $\mathbf{H}^{(l)}$ of the $l$-th layer, the next layer representations are computed as follow:
\begin{equation} \label{eq:gcn_prop}
    \mathbf{H}^{(l+1)}=\sigma(\hat{\adj}\mathbf{H}^{(l)}\mathbf{W}^{(l)}),
\end{equation}
where $\mathbf{W}^{(l)}$ is a learnable parameter matrix and $\sigma(\cdot)$ is the activation function (e.g., ReLU).  GCNs consist of multiple convolution layers, with the initial representations $\mathbf{H}^{(0)}=\feat$.

\subsection{LightGCN}

LightGCN \cite{he2020lightgcn} is a widely used GR model. Unlike GCN, LightGCN trains user and item embeddings instead of transformation matrix. Let the $0$-th embedding matrix be $\mathbf{E}^{(0)}\in \mathbb{R}^{(n_u+n_i)\times d}$, where $n_u$ and $n_i$ are the numbers of users and items, respectively. For GR, we also denote $n=n_u+n_i$. The propagation rule of LightGCN is formulated as follow:
\begin{equation} \label{eq:lightgcn_prop}
    \mathbf{E}^{(l+1)}=\tilde{\adj}\mathbf{E}^{(l)}.
\end{equation}
The final embedding matrix is a weighted average of each layer, i.e.,
\begin{equation} \label{eq:lightgcn_final}
    \mathbf{E}=\sum_{l=0}^{L-1} \alpha_{l} \tilde{\adj}^{l}\mathbf{E}^{(0)},
\end{equation}
where $L$ is the number of layers. For simplicity, we set $\alpha_l=\frac{1}{L+1}$ throughout this paper.

In LightGCN, the score between a pair of user and item is the inner product of their final embedding:
\begin{equation}
    \hat{y}_{ui}=e_u^T e_i,
\end{equation}
where $e_u$ and $e_i$ are embeddings of $u$ and $i$ from $\mathbf{E}$. The model can be trained using the bayesian personalized ranking (BPR) loss \cite{rendle2012bpr}:
\begin{equation} \label{eq:bpr}
    \mathcal{L}_{\mathrm{BPR}}=-\sum_{u=1}^{n_u}\sum_{i\in \neighbor_u}\sum_{j\in \neighbor_u^-}\ln\sigmoid(\hat{y}_{ui}-\hat{y}_{uj})+\lambda ||\mathbf{E}^{(0)}||^2,
\end{equation}
where $\sigmoid(\cdot)$ is the sigmoid function and $\lambda$ controls the $L_2$ regulation.

In order to make our paper self-contained, we provide the pipeline of LightGCN for readers who are unfamiliar with GR. Please refer to Algorithm \ref{alg:gr} for details.
\begin{algorithm}
\caption{The pipeline of LightGCN}\label{alg:gr}
\begin{algorithmic}[1]
\Require The training adjacency matrix $\adj$, the number of hidden dimensions $d$, the regulation coefficient $\lambda$.
\State Initialize the embedding lookup table $\mathbf{E}^{(0)}\in \mathbb{R}^{n\times d}$.
\While{not converge} \Comment{Training}
\State Propagate embeddings $\mathbf{E}^{(0)}$ with (\ref{eq:lightgcn_prop}).
\State Get final embeddings $\mathbf{E}$ with (\ref{eq:lightgcn_final}).
\State Compute the loss $\mathcal{L}_{\mathrm{BPR}}$ with (\ref{eq:bpr}).
\State Back propagation and update the embeddings $\mathbf{E}^{(0)}$.
\EndWhile
\For{$(u, i)$ in the test set} \Comment{Testing}
\State Compute the score $\hat{y}_{ui}=e^T_u e_i$.
\EndFor
\State Sort the test set by $\hat{y}$ and compute the metrics.
\end{algorithmic}
\end{algorithm}

\subsection{(Single-view) GCLs}

GCL is a class of unsupervised graph representation
learning method. It aims to learn the embeddings of the graph by distinguishing between similar and dissimilar nodes. In this paper, we mainly focus on single-view GCLs, in which a GCN model, denoted as $\mathrm{GCN}(\adj, \feat)$, is trained on the original graph $G$ without data augmentation. We take COLES \cite{zhu2021coles} as an example of single-view GCL. Its objective is as follow:
\begin{equation} \label{eq:coles}
    \begin{split}
        \mathcal{L}_{\mathrm{COLES}}&=\mathrm{Tr}(\mathbf{E}^{T}\mathbf{L}\mathbf{E})-\beta \mathrm{Tr}(\mathbf{E}^{T}\mathbf{L}^-\mathbf{E})\\
        &=\mathcal{L}_{\mathrm{COLES}}^+-\beta\mathcal{L}_{\mathrm{COLES}}^-,
    \end{split}
\end{equation}
where $\beta$ is a hyperparameter. For convenience, we divide the loss function into two parts.

Most of GCL researches use node classification as the downstream task (e.g., \cite{zhang2020sce, zhu2021coles}). They first learn representations with unsupervised objectives, then train a separate classifier with the downstream task loss. We also provide a pipeline of ordinary GCLs in Algorithm \ref{alg:gcl}.

\begin{algorithm}
\caption{The pipeline of ordinary GCLs}\label{alg:gcl}
\begin{algorithmic}[1]
\Require The adjacency matrix $\adj$, the feature matrix $\feat$, the number of layers $L$, the negative coefficient $\beta$, labels of certain downstream task $y$.
\State Pre-compute the Laplacian matrix $\mathbf{L}$ and sample the negative Laplacian matrix $\mathbf{L}^-$.
\State Initialize the learnable parameters $\mathbf{W}^{(l)}, \forall l \in [L]$.
\While{not converge} \Comment{Pre-training}
\State Get the final embedding $\mathbf{E}$ with (\ref{eq:gcn_prop}).
\State Compute the unsupervised loss $\mathcal{L}_{\mathrm{COLES}}$ with (\ref{eq:coles}).
\State Back propagation and update the parameters $\mathbf{W}^{(l)}$.
\EndWhile
\State Get and freeze the embedding $\mathbf{E}$.
\State Get train/test split. \Comment{Downstream Task}
\State Fit a linear (or MLP) classifier on the training set with lables $y$ and embeddings $\mathbf{E}$.
\State Predict labels for the test set with the linear classifier.
\end{algorithmic}
\end{algorithm}

In GR, there are also models that utilize GCLs (e.g., \cite{wu2021sgl, yu2023xsimgcl}). However, these models are jointly learned with the BPR and contrastive loss, instead of the pre-training then fine-tuning paradigm. Specifically, they simply add certain contrastive loss into Line 5 of Algorithm \ref{alg:gr}.
\section{Theoretical Analysis} \label{sec:theory}

In this section, we theoretically bridge the gap between GCL and GR. We start with a mild assumption.

\begin{assumption}
Throughout this paper, we assume the embedding of each node is normalized, i.e., $||e_x||=1, \forall x \in V$.
\end{assumption}

The assumption of normalized embedding is widely used in deep learning theory (e.g., \cite{huang2022ssl}). In GCL, it is also easy to normalized the embeddings for downstream tasks. However, naive adoption of normalization may hinder the performance of recommender system. Fortunately, a recent work \cite{chung4leveraging} proposes a debiasing regulation to address this issue. In this paper, we conduct GR experiments with normalized embeddings and this regulation to ensure the consistency between theory and practice.

\subsection{The equivalence between encoders} \label{sec:encoder}

We first investigate the duality between GCN and LightGCN encoder. The results help us gain a better understanding of why LightGCN works well in GR. A well-known fact is that personalized embeddings can be viewed as a linear layer with one-hot inputs. Therefore, we have the following proposition.

\begin{proposition} \label{pro:encoder}
Let users and items have one-hot features, LightGCN is a GCN without non-linearity and self-loop.
\end{proposition}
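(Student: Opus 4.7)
The plan is to start from the GCN propagation in Equation~\ref{eq:gcn_prop}, explicitly perform the two modifications stated in the proposition (delete the activation $\sigma$, and replace $\hat{\adj}$ by $\tilde{\adj}$ to strip the self-loop), plug in the one-hot feature matrix $\feat = \mathbf{I}_n$, and verify that the resulting recurrence is exactly the one iterated by LightGCN in Equation~\ref{eq:lightgcn_prop} up to a harmless reparameterization.

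First I would fix the zero-th layer. With $\feat = \mathbf{I}_n$ we have $\mathbf{H}^{(0)} = \mathbf{I}_n$, so the product $\mathbf{H}^{(0)} \mathbf{W}^{(0)}$ is literally $\mathbf{W}^{(0)}$: a free learnable matrix with one row per node. This is the folklore fact that ``a linear layer on one-hot inputs is an embedding lookup,'' which I would invoke to identify $\mathbf{W}^{(0)}$ with the LightGCN embedding table $\mathbf{E}^{(0)} \in \mathbb{R}^{n\times d}$. I would then unroll the modified GCN recurrence: by induction on $l$, $\mathbf{H}^{(l)} = \tilde{\adj}^{\,l} \prod_{k=0}^{l-1} \mathbf{W}^{(k)}$. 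Writing the product of weight matrices as a single learnable matrix and renaming it $\mathbf{E}^{(0)}$ yields $\mathbf{H}^{(l)} = \tilde{\adj}^{\,l} \mathbf{E}^{(0)}$, which coincides with the output of iterating Equation~\ref{eq:lightgcn_prop} from $\mathbf{E}^{(0)}$; combined with the $\alpha_l$-weighted sum in Equation~\ref{eq:lightgcn_final}, the two models produce the same final embedding matrix.

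The main obstacle I expect is the status of the deeper weight matrices $\mathbf{W}^{(k)}$ for $k \ge 1$, since the modified GCN syntactically still keeps one per layer while LightGCN has none. I would handle this by observing that, because $\mathbf{W}^{(0)}$ already spans an unrestricted learnable embedding space of shape $n \times d$, any further right multiplication by $\mathbf{W}^{(k)}$ merely reparameterizes this embedding and can be absorbed into $\mathbf{W}^{(0)}$ without changing the expressivity of the model class (or, equivalently, each such $\mathbf{W}^{(k)}$ can be fixed to the identity). Making this absorption explicit is the only nontrivial step; once it is in place, the equivalence between the modified GCN on one-hot inputs and LightGCN reduces to a direct comparison of propagation rules.
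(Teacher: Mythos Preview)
Your proposal is correct and follows essentially the same approach as the paper: set $\feat=\mathbf{I}$, identify $\mathbf{W}^{(0)}$ with the embedding table $\mathbf{E}^{(0)}$, and unroll the linear, self-loop-free propagation to $\tilde{\adj}^{L}\mathbf{E}^{(0)}$. You are in fact more careful than the paper, which simply writes $\mathbf{E}^{(L)}=\tilde{\adj}^L\feat\mathbf{W}^{(0)}$ without commenting on the deeper $\mathbf{W}^{(k)}$; your explicit absorption argument fills that gap.
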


\begin{proof}
    For GR, the feature matrix $\feat$ is not available. If we let $\mathbf{H}^{(0)}=\feat=\mathbf{I}$, and the learnable parameter matrix $\mathbf{W}^{(0)}=\mathbf{E}^{(0)}$. The last layer embeddings of LightGCN are simply:
    \begin{equation}
        \mathbf{E}^{(L)}=\tilde{\adj}^L XW^{(0)}.
    \end{equation}
    This is equivalent to the embeddings of GCN without non-linearity and self-loop. 
\end{proof}

It is not difficult to understand why LightGCN works without self-loop. In GR, the average of embeddings in each layer is adopted instead of only the last layer. Thus, each node is still able to acquire its own information, only with different weights. The original paper of LightGCN \cite{he2020lightgcn} shows this under a compromising form where the symmetric normalization is discarded. Alternatively, we can use the relative influence in \cite{chen2022redundancy, lampert2023selfloop} to show this.

\begin{lemma}[The relative influence in GNN \cite{lampert2023selfloop}] \label{lemma:influence}
 If the GNN passes messages along all $k$-length walks from $u$ to $v$ with equal probability, then the relative influence of input feature $e^{(0)}_u$ on node output $e^{(k)}_v$ is on the average
 \begin{equation}
     \mathbb{E}\left(\frac{\partial e^{(k)}_v/\partial e^{(0)}_u}{\sum_{u'\in V}\partial e^{(k)}_v/\partial e^{(0)}_{u'}}\right)=\frac{\adj_{uv}^k}{\sum_{u'\in V} \adj_{u'v}^k},
 \end{equation}
 where $\adj_{uv}^k$ computes the number of walks of length $k$ from node $u$ to $v$.
\end{lemma}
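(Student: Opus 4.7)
The plan is to unroll the $k$-step message passing as a sum over $k$-length walks terminating at $v$, apply the chain rule to obtain $\partial e^{(k)}_v/\partial e^{(0)}_u$ as a per-walk sum, and then use the equal-probability hypothesis to collapse the expectation into a ratio of walk counts. First, I would write a generic message-passing step as $e^{(\ell+1)}_v=\sum_{x}c^{(\ell)}_{vx}e^{(\ell)}_x$, with (possibly random) coefficients $c^{(\ell)}_{vx}$ supported on $\neighbor_v\cup\{v\}$ so that the nonzero pattern of $c^{(\ell)}$ matches $\adj$. Iterating $k$ layers and applying the chain rule gives
\begin{equation*}
\frac{\partial e^{(k)}_v}{\partial e^{(0)}_u}=\sum_{u=v_0,v_1,\dots,v_k=v}\prod_{i=0}^{k-1}c^{(i)}_{v_{i+1}v_i},
\end{equation*}
where each nonzero summand corresponds to exactly one length-$k$ walk from $u$ to $v$ in $\adj$.

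Next, the standard combinatorial identity $(\adj^k)_{uv}=\sum_{u=v_0,\dots,v_k=v}\prod_{i=0}^{k-1}\adj_{v_iv_{i+1}}$ counts precisely these walks, so the number of nonzero terms above equals $\adj_{uv}^k$. The equal-probability hypothesis amounts to saying that every length-$k$ walk contributes the same expected coefficient product; call this common value $c$. Linearity of expectation then yields $\mathbb{E}[\partial e^{(k)}_v/\partial e^{(0)}_u]=c\cdot\adj_{uv}^k$ and $\mathbb{E}[\sum_{u'}\partial e^{(k)}_v/\partial e^{(0)}_{u'}]=c\cdot\sum_{u'\in V}\adj_{u'v}^k$. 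Because $c$ cancels in the ratio displayed in the lemma, the claimed identity $\adj_{uv}^k/\sum_{u'\in V}\adj_{u'v}^k$ follows.

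The main obstacle I anticipate is that the lemma places the expectation around a ratio, whereas the argument above produces a ratio of expectations; these are equal only under some structural assumption on the randomness. Because the same random coefficients $c^{(\ell)}_{vx}$ feed numerator and denominator across every source $u'$, the right reading of the ``equal probability'' assumption is that the per-walk contributions are exchangeable, so the ratio becomes a deterministic ratio of walk counts (or at worst concentrates sharply). Formalizing this step — either by factoring out a single scalar that cancels pointwise, or by an exchangeability/concentration argument on the per-walk weights — is the delicate part of the proof; the combinatorial skeleton via $\adj^k$ is routine.
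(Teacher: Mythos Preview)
The paper does not supply its own proof of this lemma: it is quoted verbatim as a cited result from \cite{lampert2023selfloop} (and indirectly \cite{chen2022redundancy}), and the appendix only proves Lemma~\ref{lem:trace} and Theorem~\ref{the:main}. So there is no in-paper argument to compare against.

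That said, your sketch is the standard one used in the original sources: unroll the recursion, identify each nonzero chain-rule summand with a length-$k$ walk, and use the combinatorial identity for $(\adj^k)_{uv}$. You are also right that the only nontrivial point is the expectation-of-a-ratio versus ratio-of-expectations issue. In the source papers this is handled not by a concentration argument but by the structural observation that, under the stated model, the denominator $\sum_{u'}\partial e^{(k)}_v/\partial e^{(0)}_{u'}$ shares the same random per-edge factors as the numerator walk-by-walk, so after grouping terms the random scalar cancels pointwise (before taking expectation), making the ratio deterministic. Your ``factor out a single scalar that cancels pointwise'' option is therefore the one to pursue; the exchangeability/concentration route would be overkill and would only give an approximate statement.
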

With lemma \ref{lemma:influence}, we can compare the relative influence of each nodes and its own output between the self-loop added GNN and the average embedding GNN. Specifically, for GNNs with self-loop, the relative influence is $\frac{\tilde{\adj}_{uu}^L}{\sum_{u'\in V} \tilde{\adj}_{u'u}^L}$. For GNNs with average embeddings as the output, like LightGCN, we have $\sum_{l=0}^{L-1} \frac{\adj_{uu}^l}{\sum_{u'\in V} \adj_{u'u}^l}$. Consider a 2-layer GNN, the former becomes $\frac{1+d_u}{1+3d_u+d_u^{(2)}}$, and the later becomes $1 + \frac{d_u}{d_u+d_u^{(2)}}$, where $d_u^{(2)}$ is the number of 2-hop neighbors of node $u$. We observe that the self-loop added GNN receive less information from its own input than the average embedding GNN. We also compare these two variants empirically and present the results in Section \ref{sec:add_exp}.

However, comprehending the removal of non-linearity poses greater challenges as it plays a crucial role in deep learning. With Proposition \ref{pro:encoder}, we know that this is due to the fact that one-hot features do not contain rich information, and non-linearity in GNN is only useful when the node attributes are far more informative than the graph structure \cite{wei2022nonlinear}. Formally, \cite{wei2022nonlinear} considers the contextual stochastic block model (CSBM) \cite{deshpande2018csbm}. In CSBM, labels are first sampled from Rademacher distribution. Based on the node labels, node features are sampled from $\mathbb{P}_{1}$ and $\mathbb{P}_{-1}$. We further set $\mathbb{P}_{1}=\mathcal{N}(\mu, 1/d)$ and $\mathbb{P}_{-1}=\mathcal{N}(\nu, 1/d)$ for some $\mu, \nu \in \mathbb{R}^{d}$. For the node pair $(u,v)$ subject to $y_u=y_v$, the probability that an edge exists is $p$. If $y_u\neq y_v$, the probability is $q$. This model can be translated into the GR task by viewing each label as whether the item is positive or negative for a particular user $u$, and the edges between items as "item-user-item" meta-paths. Denote the non-linear propagation model as $\mathcal{P}$, and the optimal linear counterpart as $\mathcal{P}^l$, we can compute the signal-to-noise ratio (SNR) $\rho_r$ and $\rho_l$.

\begin{theorem}[The SNR of nonlinear and linear propagation model \cite{wei2022nonlinear}] \label{the:snr} Assume the structure information $\mathcal{S}(p, q)=(p-q)^2/(p+q)$ is moderate, i.e., $\mathcal{S}(p,q)=\omega_n(\frac{(\ln n)^2}{n})$ and $\frac{S(p,q)}{|p-q|}\not\to_n 1$. And $\sqrt{d}||\mu-\nu||=\omega_n(\sqrt{\ln n/S(p,q)n})$, we have:
\begin{itemize}
            \item \textbf{I. Limited Attributed Information}: When $\sqrt{d}\|\mu - \nu\| = \mathcal{O}_n(1)$, 
                \begin{align}\label{eq:lim-regime}
                    \rho_r = \Theta_n(\rho_l),
                \end{align}
                Further, if $\sqrt{d}\|\mu - \nu\|_2 = o_n(|\log(p/q)|)$, $\rho_r / \rho_l \to_n 1$;
            \item \textbf{II. Sufficient Attributed Information}: When $\sqrt{d}\|\mu - \nu\|_2  = \omega_n(1)$ and $\sqrt{d}\|\mu - \nu\|_2  =o_n(\ln n)$, we have
            \begin{equation}
                \begin{aligned}
                \rho_r &= \omega_n(\min\{\exp(d\|\mu - \nu\|_2^2/3), nS(p, q) d^{-1}\|\mu - \nu \|_2^{-2}\} \cdot \rho_l)\\ &=\omega_n(\rho_l).
                \end{aligned}
            \end{equation}
            
        \end{itemize}
    
\end{theorem}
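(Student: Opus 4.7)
The plan is to compute both SNRs explicitly under the CSBM assumptions and then compare them regime by regime. First, I would fix a target node $v$ with label $y_v\in\{-1,+1\}$ and write the $k$-hop propagated representation at $v$ as a weighted sum of the raw features of its (multi-hop) neighbors, where the weights come from the normalized adjacency. Under the CSBM, conditional on the label vector, the number of same-label versus different-label neighbors concentrates around multiples of $p$ and $q$, and each neighbor's feature is an independent Gaussian centered at $\mu$ or $\nu$ with covariance $\mathbf{I}/d$. This makes the linear propagation output $\mathcal{P}^l$ an (approximately) Gaussian vector whose conditional mean is $(\alpha(p,q)\mu+\beta(p,q)\nu)$ for some coefficients depending on $y_v$, and whose variance scales like $1/(d \cdot \bar d_v)$ where $\bar d_v$ is the expected degree. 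Plugging the Bayes-optimal linear discriminant into the standard mean-difference-over-variance formula gives $\rho_l$ in closed form in terms of $\mathcal{S}(p,q)$ and $\sqrt{d}\|\mu-\nu\|$.

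Second, for the nonlinear model $\mathcal{P}$, I would compute the first two moments of the activation (say ReLU or a smooth proxy) applied to the aggregated feature. Since the aggregated feature is approximately Gaussian with a class-dependent mean $m_\pm$ and shared covariance $\Sigma$, I can use Gaussian integration (Mills-ratio-type identities and Hermite expansion) to obtain $\mathbb{E}[\sigma(\text{agg})\mid y_v=\pm 1]$ and $\mathrm{Var}[\sigma(\text{agg})\mid y_v=\pm 1]$ in terms of $m_\pm$ and $\Sigma$, then define $\rho_r$ as the squared mean gap over conditional variance under the best measurable post-processing.

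With these expressions in hand, the regime analysis splits along the magnitude of $\sqrt d\|\mu-\nu\|$ relative to the noise scale. In Regime I, $\sqrt d\|\mu-\nu\|=\mathcal{O}_n(1)$, so both $m_+$ and $m_-$ fall in the region where ReLU is effectively affine on the scale of $\Sigma^{1/2}$; a first-order Taylor argument shows the nonlinear moments agree with the linear ones up to a constant, yielding $\rho_r=\Theta_n(\rho_l)$. When additionally $\sqrt d\|\mu-\nu\|=o_n(|\log(p/q)|)$, the structure component $\mathcal{S}(p,q)$ dominates the feature component, the nonlinearity correction is of strictly smaller order, and $\rho_r/\rho_l\to 1$ follows. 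In Regime II, $\sqrt d\|\mu-\nu\|=\omega_n(1)$, the two means $m_\pm$ sit deep in opposite sides of the activation threshold; the sharp Gaussian tail bound gives an exponential separation of order $\exp(d\|\mu-\nu\|^2/3)$ in the feature-dominated sub-regime, while in the structure-dominated sub-regime the modulation of aggregation weights by $\mathcal{S}(p,q)$ produces the $nS(p,q)d^{-1}\|\mu-\nu\|^{-2}$ gain, taking the minimum of the two.

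The main obstacle I expect is the Regime II lower bound on $\rho_r/\rho_l$: one needs sharp two-sided control on the tails of the nonlinear output and simultaneously an argument that no linear classifier can exploit the same gain, which requires matching upper bounds on $\rho_l$ via a Neyman-Pearson-style reduction to linear tests on the aggregated Gaussian. Concentration of neighborhood sizes via Chernoff/Bernstein will be routine under the $\omega_n((\log n)^2/n)$ assumption on $\mathcal{S}(p,q)$, but keeping the constants tight enough to distinguish $\Theta_n$ from $\omega_n$ across the two sub-regimes is the delicate part.
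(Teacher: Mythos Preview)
This theorem is not proved in the paper at all: it is imported verbatim from \cite{wei2022nonlinear} and only \emph{applied} (by observing that in GR the one-hot features have $\mu=\nu$, hence $\sqrt{d}\|\mu-\nu\|=0=\mathcal{O}_n(1)$, placing LightGCN in Regime~I). The appendix only proves Lemma~\ref{lem:trace} and Theorem~\ref{the:main}; there is no proof of Theorem~\ref{the:snr} to compare your proposal against.

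Your sketch is a plausible outline of how the original authors might have argued, but since the present paper treats the statement as a black box, the appropriate ``proof'' here is simply a citation. If your goal is to reconstruct the result from \cite{wei2022nonlinear}, you would need to consult that paper directly to check whether your moment-computation-plus-regime-split matches their actual argument; nothing in the paper under review constrains or confirms your approach.
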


Theorem \ref{the:snr} shows that when attributed information is limited, the SNR of regular GNNs with nonlinear propagation has the same order as its optimal linear counterpart. LightGCN in GR tasks clearly falls into this category. Since orders are not correlated to whether the item is positive or negative, the expectation of node inputs are the same: $\mu=\nu=\frac{1}{n}\mathbf{1}$, and $\sqrt{d}||u - v||=0=O_n(1)$.

\subsection{The equivalence between loss functions}

\begin{figure*}[t] 
    \centering
    \begin{subfigure}[b]{0.32\textwidth}
        \centering
        \includegraphics[width=\textwidth]{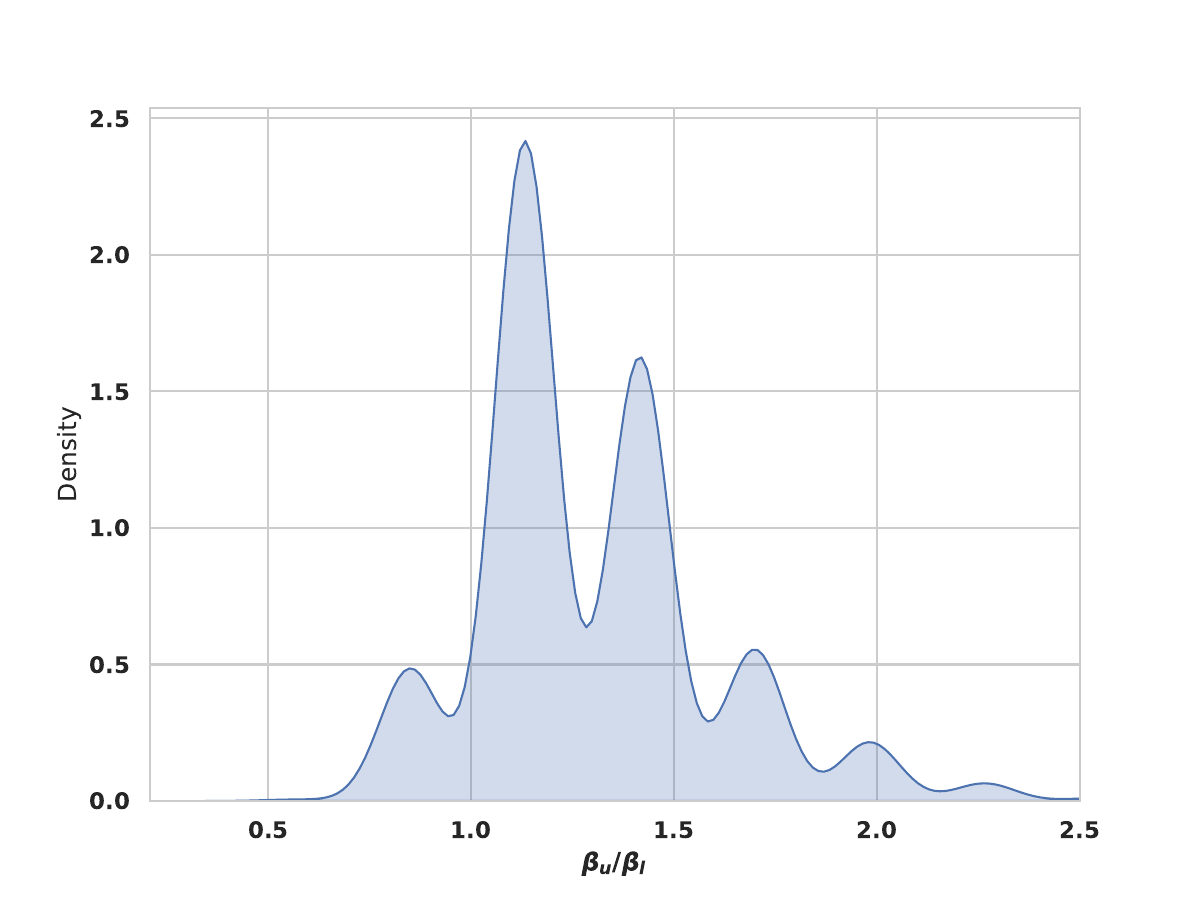}
        \caption{Yelp2018}
    \end{subfigure}
    \hfill
    \begin{subfigure}[b]{0.32\textwidth}
        \centering
        \includegraphics[width=\textwidth]{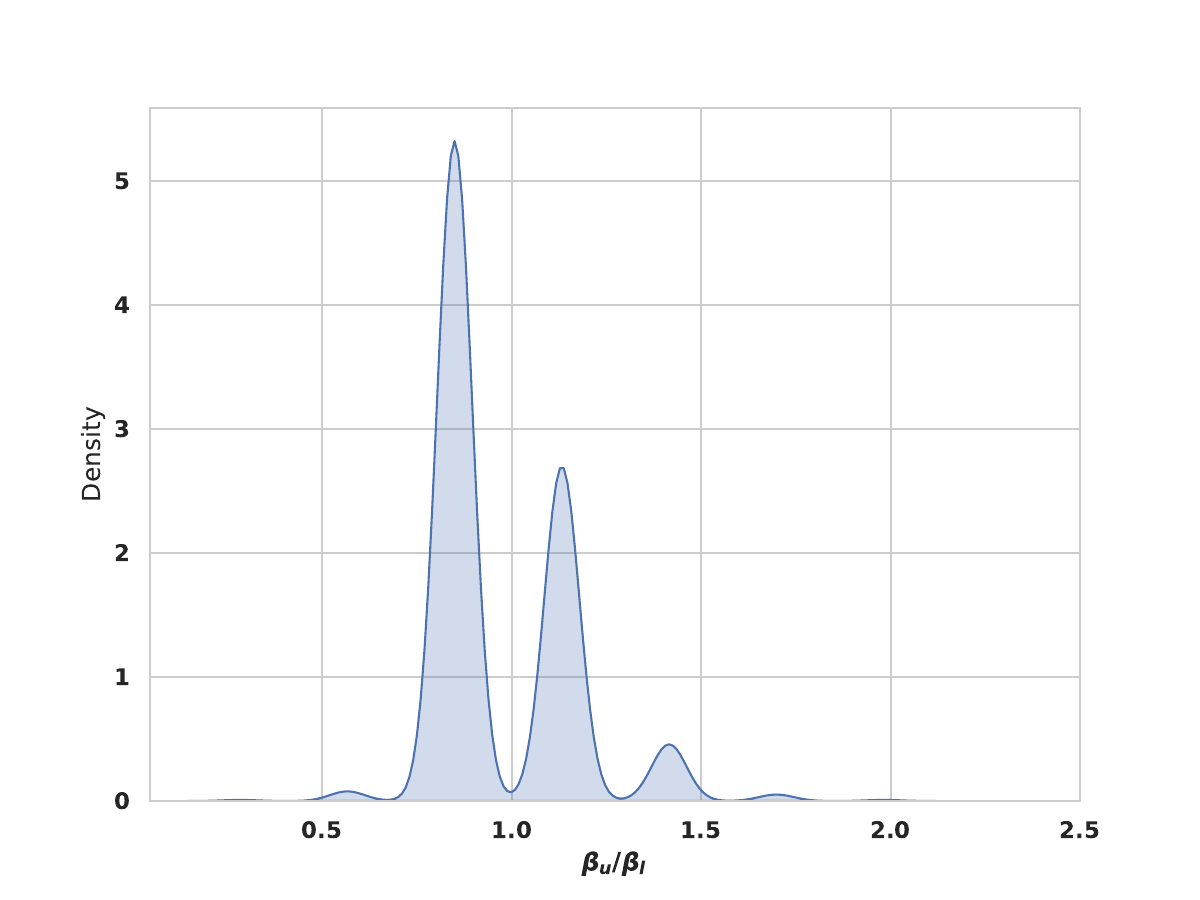}
        \caption{Amazon-Kindle}
    \end{subfigure}
    \hfill
    \begin{subfigure}[b]{0.32\textwidth}
        \centering
        \includegraphics[width=\textwidth]{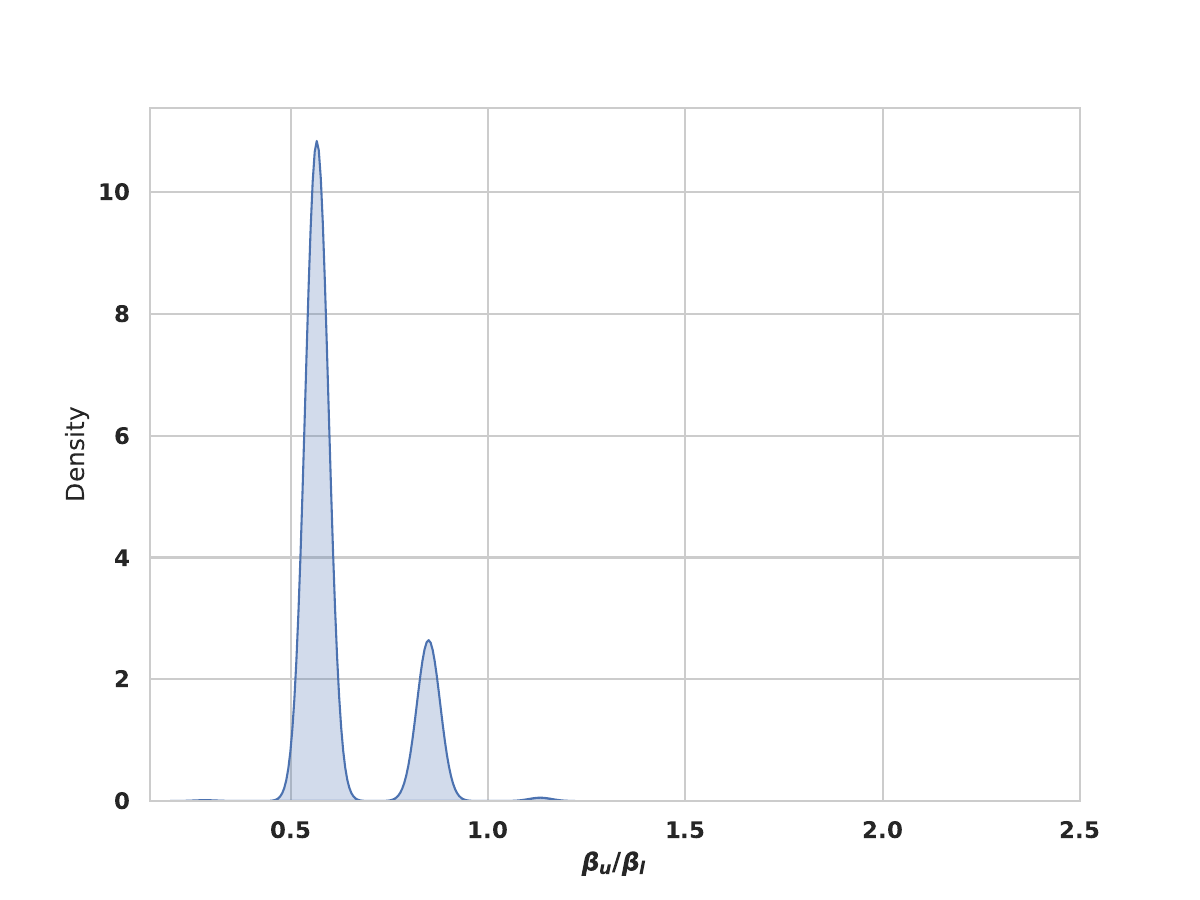}
        \caption{Alibaba-iFashion}
    \end{subfigure}
    \caption{The ratio of negative coefficient $\beta_u/\beta_l$ on three real-world datasets.}
    \label{fig:beta_ratio}
\end{figure*}

We then investigate the equivalence between the loss functions, establishing a bridge between GR and GCL, which can simultaneously facilitate the advance of both fields. We first present the following lemma, the proof is deferred to Appendix \ref{app:proofs}.

\begin{lemma} \label{lem:trace}
    The trace of the quadratic form of the Laplacian matrix is equal to the smoothness on the graph, i.e.,

    \begin{equation}
        \mathrm{Tr}(\mathbf{E}^T \mathbf{L} \mathbf{E})=\sum_{u=1}^{n_u}\sum_{i\in \mathcal{N}_u}||e_u-e_i||^2.
    \end{equation}
\end{lemma}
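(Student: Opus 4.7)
The plan is to start from the standard quadratic-form identity for the graph Laplacian and then exploit the bipartite structure of the recommendation graph to eliminate the customary factor of $1/2$ and rewrite the double sum as a single sum indexed by users and their item neighbours.

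First I would expand $\mathrm{Tr}(\mathbf{E}^T \mathbf{L} \mathbf{E})$ using $\mathbf{L}=\mathbf{D}-\adj$. The diagonal term yields $\mathrm{Tr}(\mathbf{E}^T \mathbf{D} \mathbf{E}) = \sum_{x\in V} d_x \|e_x\|^2$, while the adjacency term yields $\mathrm{Tr}(\mathbf{E}^T \adj \mathbf{E}) = \sum_{x,y} \adj_{xy}\, e_x^T e_y$. Combining them and rearranging the degree sum as $\sum_x d_x \|e_x\|^2 = \tfrac{1}{2}\sum_{x,y}\adj_{xy}(\|e_x\|^2 + \|e_y\|^2)$ (using symmetry of $\adj$), one obtains the classical identity
\begin{equation*}
    \mathrm{Tr}(\mathbf{E}^T \mathbf{L} \mathbf{E}) = \tfrac{1}{2}\sum_{x,y\in V} \adj_{xy}\, \|e_x - e_y\|^2.
\end{equation*}

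The second step is where the bipartite structure enters. Because $\adj$ has the block form with $\mathbf{R}$ in the upper-right block and $\mathbf{R}^T$ in the lower-left block, any nonzero entry $\adj_{xy}=1$ corresponds either to a (user, item) pair with $u \le n_u$ and $i > n_u$, or to the mirror (item, user) pair. These two cases contribute the same summand $\|e_u - e_i\|^2$, so the symmetric double sum pairs up and cancels the prefactor $1/2$, giving
\begin{equation*}
    \tfrac{1}{2}\sum_{x,y\in V} \adj_{xy}\, \|e_x - e_y\|^2 = \sum_{u=1}^{n_u}\sum_{i\in \neighbor_u}\|e_u - e_i\|^2,
\end{equation*}
which is exactly the claim.

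I do not anticipate a real obstacle here; the only point that requires care is the bookkeeping in the second step, since writing the right-hand side only as a sum over users (not over all nodes) relies specifically on the bipartite block form of $\adj$ rather than on a generic symmetric graph. A generic graph would leave the prefactor $1/2$ in place with a sum over all ordered node pairs.
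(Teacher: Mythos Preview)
Your proof is correct and follows essentially the same approach as the paper's: both derive the classical identity $\mathrm{Tr}(\mathbf{E}^T\mathbf{L}\mathbf{E})=\tfrac{1}{2}\sum_{x,y}\adj_{xy}\|e_x-e_y\|^2$ and then exploit the bipartite block structure of $\adj$ to collapse the symmetric double sum into a user-indexed sum, eliminating the factor $1/2$. The only cosmetic difference is that the paper carries out the first step coordinate-wise (summing over the feature index $k$) whereas you work directly with vector norms and inner products.
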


Lemma \ref{lem:trace} makes $\mathcal{L}_{\mathrm{COLES}}$ easier to deal with, and can be found in standard textbooks, e.g.,  \cite{hamilton2020graphbook}. We then present our result in the following main theorem, the proof is deferred to Appendix \ref{app:proofs}.

\begin{theorem} \label{the:main}
With the normalized embeddings, we have the following equivalence:
\begin{equation}
    \begin{split}
        \frac{K}{2}\mathcal{L}_{\mathrm{COLES}}^+  -\frac{d_{\mathrm{min}}}{2}\mathcal{L}_{\mathrm{COLES}}^- +d_{\mathrm{min}}Kn_u - mK \leq \mathcal{L}_{\mathrm{BPR}} \\
        \leq \frac{K}{2}\mathcal{L}_{\mathrm{COLES}}^+ - \frac{d_{\mathrm{max}}}{4}\ln(\frac{2e^2}{e^2+1})\mathcal{L}_{\mathrm{COLES}}^- + \mathrm{constant},
    \end{split}
\end{equation}
where $\mathrm{constant}=d_{\mathrm{max}}K n_u\ln(\frac{2e^3+2e}{e^2+1}) - mK$.
\end{theorem}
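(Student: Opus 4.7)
The plan is to translate everything into inner-product form. By normalization $\|e_x\|=1$ we have $\|e_u-e_v\|^2 = 2(1-\hat{y}_{uv})$, so Lemma~\ref{lem:trace} applied to $\mathbf{L}$ and $\mathbf{L}^-$ yields $\mathcal{L}_{\mathrm{COLES}}^{+} = 2m - 2\sum_{(u,i)\in E}\hat{y}_{ui}$ and $\mathcal{L}_{\mathrm{COLES}}^{-} = 2K n_u - 2\sum_{u,\,j\in\mathcal{N}_u^-}\hat{y}_{uj}$. Writing $f(x) = \ln(1+e^{-x})$ and dropping the regularizer $\lambda\|\mathbf{E}^{(0)}\|^2$ (a constant under normalization), the theorem reduces to sandwiching $\mathcal{L}_{\mathrm{BPR}} = \sum_{u,i,j} f(\hat{y}_{ui}-\hat{y}_{uj})$ between affine expressions in $\sum_{u,i}\hat{y}_{ui}$ and $\sum_{u,j}\hat{y}_{uj}$ and then converting back.

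For the lower bound, I would use the elementary inequality $f(x)\geq -x$, which after summing over the $Km$ triples gives $\mathcal{L}_{\mathrm{BPR}} \geq -K\sum_{u,i}\hat{y}_{ui} + \sum_u d_u \sum_j \hat{y}_{uj}$. The only nontrivial step is handling the degree-weighted second term: splitting $d_u = d_{\min} + (d_u - d_{\min})$ and using $\hat{y}_{uj}\geq -1$ gives $\sum_u (d_u - d_{\min})\sum_j \hat{y}_{uj} \geq -K(m - d_{\min} n_u)$, hence $\sum_u d_u\sum_j \hat{y}_{uj} \geq d_{\min}\sum_{u,j}\hat{y}_{uj} - K(m - d_{\min} n_u)$. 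Substituting $\sum_{u,i}\hat{y}_{ui} = m - \frac{1}{2}\mathcal{L}_{\mathrm{COLES}}^{+}$ and $d_{\min}\sum_{u,j}\hat{y}_{uj} = d_{\min}K n_u - \frac{d_{\min}}{2}\mathcal{L}_{\mathrm{COLES}}^{-}$ and collecting constants delivers the stated lower bound.

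For the upper bound I would reparametrize each summand using the pair of squared distances $(p, q) = (\|e_u-e_i\|^2, \|e_u-e_j\|^2) \in [0,4]^2$, so that $f(\hat{y}_{ui}-\hat{y}_{uj}) = h(p,q) := \ln(1 + e^{(p-q)/2})$, a convex function on $[0,4]^2$. Since any affine upper bound on a convex function over a rectangle need only be verified at its four vertices, I would introduce the per-user affine majorizer $h(p,q) \leq c_0^{(u)} + \frac{1}{2} p - \frac{\alpha d_{\max}}{4 d_u} q$ with $\alpha = \ln\frac{2e^2}{e^2+1}$ and $c_0^{(u)} = \ln(1+e^{-2}) + \frac{\alpha d_{\max}}{d_u}$. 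The four corner checks are routine and rely on the identities $\ln(1+e^{-2}) + \alpha = \ln 2$ and $\ln(1+e^2) - 2 = \ln(1+e^{-2})$. Summing the per-user bound over the $d_u K$ triples of user $u$ and then over users, the $1/d_u$ factor in the $q$-coefficient cancels the outer $d_u$ weight and consolidates the negative-sample contribution into exactly $-\frac{\alpha d_{\max}}{4}\mathcal{L}_{\mathrm{COLES}}^{-}$; the constant piece collapses to $K m \ln(1+e^{-2}) + K \alpha d_{\max} n_u$. Upper-bounding $K m \leq K d_{\max} n_u$ in the first piece and using $\ln(1+e^{-2}) + \alpha = \ln 2$ shrinks this to $K d_{\max} n_u \ln 2$, and finally the rewrite $K d_{\max} n_u \ln 2 + K(d_{\max}n_u - m) = d_{\max} K n_u \ln(2e) - m K$ yields the claimed constant.

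The main obstacle I expect is the upper bound. A naive affine bound on $f(x)$ in the scalar variable $x = \hat{y}_{ui}-\hat{y}_{uj}$ gives only $K/4$ rather than the required $K/2$ coefficient on $\mathcal{L}_{\mathrm{COLES}}^{+}$, and a uniform (non per-user) affine bound in $(p,q)$ yields $d_{\min}$ rather than $d_{\max}$ on $\mathcal{L}_{\mathrm{COLES}}^{-}$, because the coefficient on $q$ must be negative and so the worst case for an upper bound on $-c\sum_u d_u\sum_j q_{uj}$ is the minimum-degree replacement. The key trick is to let the $q$-coefficient scale as $1/d_u$: this makes $d_u c_2^{(u)}$ independent of $u$, which is precisely what is needed for the negative-sample sum to consolidate cleanly, and forces the unusual constant $\alpha = \ln\frac{2e^2}{e^2+1}$ to appear when the corner $(0,4)$ of the square is checked. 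Once this is in place, the remaining work is bookkeeping between $m$, $d_{\min} n_u$, $d_{\max} n_u$, and the inner-product versus distance forms of $\mathcal{L}_{\mathrm{COLES}}^{\pm}$.
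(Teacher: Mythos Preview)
Your overall strategy—write $\mathcal{L}_{\mathrm{BPR}}$ via $-\ln\sigma_{\mathrm{s}}(x)=\ln(1+e^{-x})$, split off the positive-pair part exactly, and convert inner products to squared distances using $\|e_u-e_v\|^2=2(1-\hat y_{uv})$ together with Lemma~\ref{lem:trace}—is the same as the paper's. The differences lie in how each half is bounded.

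\textbf{Lower bound (a genuine gap).} Your splitting $d_u=d_{\min}+(d_u-d_{\min})$ with $\hat y_{uj}\ge -1$ is more careful than the paper, which simply passes from $\sum_u d_u\sum_j\hat y_{uj}$ to $d_{\min}\sum_u\sum_j\hat y_{uj}$ in one line. But your route does \emph{not} recover the stated constant: carrying your substitutions through gives
\[
\mathcal{L}_{\mathrm{BPR}}\ \ge\ \tfrac{K}{2}\mathcal{L}_{\mathrm{COLES}}^{+}-\tfrac{d_{\min}}{2}\mathcal{L}_{\mathrm{COLES}}^{-}+2d_{\min}Kn_u-2mK,
\]
which is weaker than the theorem by exactly $K(m-d_{\min}n_u)\ge 0$. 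So the sentence ``collecting constants delivers the stated lower bound'' is incorrect; your extra step costs precisely the slack that separates your constant from the paper's.

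\textbf{Upper bound (a different, more elaborate route).} The paper never bounds $f(\hat y_{ui}-\hat y_{uj})$ as a single object. It uses the identity $-\ln\sigma_{\mathrm{s}}(\hat y_{ui}-\hat y_{uj})=-\hat y_{ui}+\ln(e^{\hat y_{ui}}+e^{\hat y_{uj}})$, so the $\tfrac{K}{2}\mathcal{L}_{\mathrm{COLES}}^{+}$ coefficient falls out exactly (your worry about ``only $K/4$'' never arises). Then it bounds the remaining term by $e^{\hat y_{ui}}\le e$ and the one–dimensional secant $\ln(e+e^{x})\le \tfrac12\ln\tfrac{2e^2}{e^2+1}\,(x+1)+\ln(e+e^{-1})$ on $x\in[-1,1]$; replacing $d_u$ by $d_{\max}$ is legitimate here because $\ln(e+e^{\hat y_{uj}})>0$. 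Your per-user two–variable affine majorizer is correct (the four corner checks and convexity of $h-\ell$ on the square justify it) and in fact yields the tighter constant $Km\ln(1+e^{-2})+K\alpha d_{\max}n_u$; you then relax twice—first $m\le d_{\max}n_u$, then adding $K(d_{\max}n_u-m)\ge 0$—to land on the paper's constant. That last step is a further upper bound, not a ``rewrite''. The paper's argument is shorter and avoids the per-user coefficients entirely.
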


In GR researches, the number of negative samples is often set as one, i.e., $K=1$. Therefore, the hyperparameter that control the effect of the negative terms $\beta$ is $\beta_{l}=d_{\mathrm{min}}$ for the lower bound and $\beta_{u}=\frac{d_{\mathrm{max}}}{2}\ln(\frac{2e^2}{e^2+1})$ for the upper bound. If $\beta_u/\beta_l$ is close enough to $1$, we can conclude that the equivalence of GR and GCL loss is good. We empirically show this on three real-world datasets, Yelp2018, Amazon-Kindle and Alibaba-iFashion. We use 1,000 batches to get the kernel density estimation of $\beta_u/\beta_l$, and plot the distribution in Figure \ref{fig:beta_ratio}. We observe that $\beta_u/\beta_l$ is roughly concentrated around $1$, which indicates the corresponding negative coefficient of the upper bound and the lower bound are similar.

It can be  concluded from Theorem \ref{the:main} that, the performance of GNNs on the recommendation task, namely $\mathcal{L}_{\mathrm{BPR}}$, can be well bounded by certain weighted versions of the contrastive loss $\mathcal{L}_{\mathrm{COLES}}$. In other words, training the model using GR loss and GCL loss is essentially equivalent. With this information, we can facilitate knowledge exchange between the two fields. The research on single-view GCL can be applied to GR. Similarly, the findings in GR research can also assist in enhancing GCL. We give some research examples in the next section.
\section{Experiments} \label{sec:exp}

\begin{table}[]
    \centering
    \caption{Summary of the datasets used in node classification.}
    \begin{tabular}{c|cccc}
    \toprule
      \textbf{Dataset}   & \textbf{\#Nodes} &  \textbf{\#Edges}&\textbf{\#Features} & \textbf{\#Classes} \\
      \midrule
       Cora  & 2,708 & 5,429 &1,433 &  7\\
       CiteSeer & 
3,327 & 4,732 & 3,703 & 6 \\
       PubMed & 19,717 & 44,338 & 500 & 3 \\
       Computers & 13,752 & 491,722 & 767 & 10 \\
       Photo&7,650&238,163 & 745 & 8 \\
       Co-CS & 18,333 & 163,788 & 6,805 & 15 \\
       Co-Physics & 34,493 & 495,924 & 8,415 & 5\\
       \bottomrule
    \end{tabular}
    \label{tab:stat1}
\end{table}

In this section, we validate through experiments that the loss functions of GCL and GR can be used interchangeably. These experiment can be divided into two categories, namely \emph{GR-inspired GCL research} and \emph{GCL-inspired GR research}. We believe that our findings can prompt a broader range of more sophisticated researches, which we leave for future work.

\subsection{GR-inspired GCL research}

\begin{table*}[h]
    \centering
    \caption{Evaluation of BPR loss on unsupervised single-view GCL. We use classification accuracy as the metric. All experiments are repeated 10 times. We \textbf{bold} the best-performing method and \underline{underline} the second-best method for each dataset. We cannot get the results of SP-GCL on CiteSeer since it does not support graphs with isolated nodes.}
    \begin{tabular}{lccccccc}
    \toprule
        \textbf{Methods} & \textbf{Cora} &\textbf{CiteSeer} & \textbf{PubMed} & \textbf{Computers} & \textbf{Photo}&\textbf{Co-CS} &\textbf{Co-Physics}\\
        \midrule
        GraphSAGE & 77.8 $\pm$ 1.3 & 68.2 $\pm$ 1.5 & 75.5 $\pm$ 0.9 & 68.3 $\pm$ 0.7 & 73.2 $\pm$ 0.7 & 90.7 
        $\pm$ 0.4 & 94.3 $\pm$ 0.1\\
        SCE& 80.9 $\pm$ 0.8 & 70.0 $\pm$ 0.7 & 76.3 $\pm$ 0.7 & 84.4 $\pm$ 0.8 & 89.5 $\pm$ 0.6 & 91.6 $\pm$ 0.2 & \underline{95.5} $\pm$ 0.1\\
        COLES & \textbf{81.8} $\pm$ 0.4 & \textbf{70.7} $\pm$ 1.0 & 75.6 $\pm$ 1.2 & 85.9 $\pm$ 0.4 & \textbf{92.1} $\pm$ 0.4 & 92.0 $\pm$ 0.2 & \textbf{95.6} $\pm$ 0.1\\
        SP-GCL & 81.5 $\pm$ 0.5 & - & \textbf{78.9} $\pm$ 1.0 & \textbf{87.7} $\pm$ 0.5 &  \underline{91.9} $\pm$ 0.5 & \textbf{92.4} $\pm$ 0.3 & 94.8 $\pm$ 0.1\\
        \midrule
        BPR & \underline{81.6} $\pm$ 0.6 & \underline{70.6} $\pm$ 0.8 & \underline{78.7} $\pm$ 0.8 & \underline{86.1} $\pm$ 0.5 & 91.7 $\pm$ 0.4 & \underline{92.2} $\pm$ 0.3 & 95.2 $\pm$ 0.1\\
    \bottomrule
    \end{tabular}
    
    \label{tab:bpr_vs_svgcl}
\end{table*}

\begin{table*}[h]
    \centering
    \caption{Evaluation of joint training between COLES and various multi-views GCLs. We use classification accuracy as the metric. All experiments are reported 10 times.}
    \begin{tabular}{lccccccc}
    \toprule
        \textbf{Methods} & \textbf{Cora} &\textbf{CiteSeer} & \textbf{PubMed} & \textbf{Computers} & \textbf{Photo}&\textbf{Co-CS} &\textbf{Co-Physics}\\
        \midrule
        GRACE & 83.5 $\pm$ 0.3 & 70.1 $\pm$ 0.9 & 80.2 $\pm$ 0.7 & 85.4 $\pm$ 0.4 & 90.2 $\pm$ 0.4 & 92.1 $\pm$ 0.3 & 95.5 $\pm$ 0.1\\
        GCA & 83.3 $\pm$ 0.5 & 68.7 $\pm$ 0.6 & 81.2 $\pm$ 0.6 & 88.9 $\pm$ 0.5 & 92.3 $\pm$ 0.3 & 92.2 $\pm$ 0.1 & 95.5 $\pm$ 0.1\\
        CCA-SSG & 82.1 $\pm$ 0.4 & 70.5 $\pm$ 0.9 & 81.8 $\pm$ 0.7 & 88.4 $\pm$ 0.6 & 92.5 $\pm$ 0.2 & 92.5 $\pm$ 0.4 & 95.4 $\pm$ 0.2\\
        GGD & 81.7 $\pm$ 0.5 & 69.6 $\pm$ 0.7 & 80.4 $\pm$ 0.4 & 85.5 $\pm$ 0.8 & 92.1 $\pm$ 0.4 & 92.4 $\pm$ 0.4 & 95.2 $\pm$ 0.1\\
        PolyGCL & 80.1 $\pm$ 0.4 & 70.2 $\pm$ 0.5 & 79.2 $\pm$ 0.7 & 82.8 $\pm$ 0.6 & 89.2 $\pm$ 0.5 & 91.3 $\pm$ 0.2 & 95.5 $\pm$ 0.2\\
        \midrule
        COLES + GRACE & 83.6 $\pm$ 0.5 & 70.8 $\pm$ 0.6 & 80.7 $\pm$ 0.8 & 85.5 $\pm$ 0.4 & 90.6 $\pm$ 0.4 & 92.2 $\pm$ 0.2 & 95.5 $\pm$ 0.1\\
        COLES + GCA & 83.7 $\pm$ 0.4& 70.6 $\pm$ 0.7 & 81.5 $\pm$ 0.5 & 89.1 $\pm$ 0.3 & 92.4 $\pm$ 0.4 & 92.5 $\pm$ 0.2 & 95.6 $\pm$ 0.2\\
        COLES + CCA-SSG & 82.6 $\pm$ 0.5 & 72.0 $\pm$ 0.4 & 81.9 $\pm$ 0.5 & 88.4 $\pm$ 0.5 & 92.7 $\pm$ 0.3 & 92.9 $\pm$ 0.5 & 95.5 $\pm$ 0.1\\
        COLES + GGD & 82.3 $\pm$ 0.7 & 70.9 $\pm$ 0.8 & 80.4 $\pm$ 0.6 & 86.2 $\pm$ 0.3 & 92.5 $\pm$ 0.2 & 92.5 $\pm$ 0.3 & 95.6 $\pm$ 0.1\\
        COLES + PolyGCL & 81.8 $\pm$ 0.6 & 71.0 $\pm$ 0.7 & 80.2 $\pm$ 0.7 & 86.2 $\pm$ 0.5 & 92.5 $\pm$ 0.4 & 92.7 $\pm$ 0.4 & 95.6 $\pm$ 0.1\\
    \bottomrule
    \end{tabular}
    
    \label{tab:mvgcl}
\end{table*}

We start with the GR-inspired GCL research. A natural question is: \emph{Can we use the BPR loss to train GNNs in an unsupervised manner?} To investigate this question, we conduct unsupervised graph representation learning on seven datasets and evaluate the effectiveness of the models using node classification, the results are reported in Table \ref{tab:bpr_vs_svgcl}. We use Cora, CiteSeer and PubMed \cite{yang2016cora}, Amazon-Computers and Amazon-Photo, Coauthor-CS and Coauthor-Physics \cite{shchur2018amazon}. The data statitics are in Table \ref{tab:stat1}. For baselines, we include common single-view GCLs: 

\begin{itemize}
    \item Unsupervised GraphSAGE \cite{hamilton2017sage}: An early work that samples positive pairs by random walk.
    \item SCE \cite{zhang2020sce}: A  sparsest cut inspired, negative samples only GCL method.
    \item COLES \cite{zhu2021coles}: A contrastive Laplacian eigenmaps method with both positive and negative pairs.
    \item SP-GCL \cite{wang2022spgcl}: A single-view GCL method that works on both homophilic and heterophilic graphs.
\end{itemize}

For all datasets, we first train a GNN with 512 hidden dimensions, then fit a linear classifier and report its accuracy. We use the bpr loss $\mathcal{L}_{\mathrm{BPR}}$ to replace the COLES loss $\mathcal{L}_{\mathrm{COLES}}$, and dub the model "BPR". In GCL, this can be done by treating connected nodes as the positive pair, and sampling one negative sample for each anchor node. With the contrastive pairs, we simply train the GNN with Equation (\ref{eq:bpr}) instead of Equation (\ref{eq:coles}). From Table \ref{tab:bpr_vs_svgcl}, we observe that BPR works well as a single-view GCL, despite not achieving state-of-the-art performance. This result further supports the validity of our findings.

Previous works have reported that the performance of GR can be improved if it is jointly trained with multi-view GCLs. Since we have obtained the equivalence between the bpr loss and single-view GCLs, it is intriguing to ask: \emph{Will joint training with multi-view GCLs benefits single-view GCL?} We conduct experiments on the previously mentioned seven datasets. We modify the loss function of COLES $\mathcal{L}_{\mathrm{COLES}}$ to the following function:
\begin{equation}
    \mathcal{L}_{\mathrm{COLES + MVGCL}}=\mathcal{L}_{\mathrm{COLES}}+\gamma\mathcal{L}_{\mathrm{MVGCL}},
\end{equation}
where $\mathcal{L}_{\mathrm{MVGCL}}$ is the loss function of a certain multi-view GCL and $\gamma$ is a hyperparameter. We choose five representative methods to be jointly trained with COLES:
\begin{itemize}
    \item GRACE \cite{zhu2020grace}: An early multi-view GCL method that uses an extended version of the InfoNCE loss. The edge removing graph augmentation technique is proposed in this method.
    \item GCA \cite{zhu2021gca}: An improved version of GRACE that uses adaptive augmentation instead of the static one.
    \item CCA-SSG \cite{zhang2021ccassg}: It optimizes a feature-level objective inspired by the canonical correlation analysis. Methods like CCA-SSG are not viewed as contrastive models in some paper, but they are generally equivalent according to \cite{garrido2023duality}.
    \item GGD \cite{zheng2022ggd}: A scalable GCL method with group discrimination.
    \item PolyGCL \cite{chen2023polygcl}: A recent GCL method with polynomial filters.
\end{itemize}

We observe that, joint training of COLES and multi-view GCLs indeed improves the performance on most of datasets. It could be attribute to the regulation effects brought by graph augmentation. On many datasets, this combination also exceeds the original multi-view GCLs. This is also intuitive since the single-view GCL includes explicit structure information that helps models to identify more positive samples.

\subsection{GCL-inspired GR research} \label{sec:gcl4gr}

\begin{table}[]
    \centering
    \caption{Summary of the datasets used in graph recommender.}
    \begin{tabular}{c|ccc}
    \toprule
        \textbf{Dataset} & \textbf{\#Users} & \textbf{\#Items} &
        \textbf{\#Edges}\\
    \midrule
     Yelp2018    & 31,668 & 38,048 & 1,561,406\\
     Amazon-Kindle & 138,333 &  98,572 & 1,909,965\\
     Alibaba-iFashion & 300,000 & 81,614 & 1,607,813\\
     \bottomrule
    \end{tabular}
    \label{tab:stat2}
\end{table}

\begin{table*}[h]
    \centering
    \caption{Evaluation of GR models that trained solely with the single-view GCL loss. We use Recall@20 and NDCG@20 as the metrics.}
    \begin{tabular}{lcccccc}
    \toprule
     \multirow{2}{*}{\textbf{Methods}}  & \multicolumn{2}{c}{\textbf{Yelp2018}} & \multicolumn{2}{c}{\textbf{Amazon-Kindle}} & \multicolumn{2}{c}{\textbf{Alibaba-iFashion}}  \\
     \cmidrule(lr){2-3}  \cmidrule(lr){4-5} \cmidrule{6-7}
     
         & \textbf{Recall@20} & \textbf{NDCG@20} & \textbf{Recall@20} & \textbf{NDCG@20} & \textbf{Recall@20} & \textbf{NDCG@20}\\
         \midrule
         LightGCN & 0.0639 & 0.0525 & 0.2057 & 0.1315 & 0.0955 & 0.0461	\\
         LightGCN$_\mathrm{COLES}$ & 0.0623 & 0.0507 & 0.1996 & 0.1302 & 0.0973 & 0.0458\\
         \midrule
         SGL & 0.0675 & 0.0555 & 0.2069 & 0.1328 & 0.1032 & 0.0498\\
         SGL$_\mathrm{COLES}$ & 0.0651 & 0.0548 & 0.2035 & 0.1322 & 0.1024 & 0.0489 \\
         \midrule
         SimGCL & 0.0721 & 0.0601 & 0.2104 & 0.1374 & 0.1151 & 0.0567 \\
         SimGCL$_\mathrm{COLES}$ & 0.0703 & 0.0592 & 0.2075 & 0.1355 & 0.1097 & 0.0546\\
         \midrule
         XSimGCL & 0.0723 & 0.0604 & 0.2147 & 0.1415 & 0.1196 & 0.0586\\
         XSimGCL$_\mathrm{COLES}$ & 0.0710 & 0.0594 & 0.2080 & 0.1378 & 0.1105 & 0.0566\\
    \bottomrule
    \end{tabular}
    \label{tab:coles_replace_bpr}
\end{table*}

 We then provide several examples of GCL-inspired GR research. Similar to the previous section, we begin by validating the following question: \emph{Can we train a graph recommender solely using single-view GCL loss?} To do so, we want to replace the bpr loss $\mathcal{L}_{\mathrm{bpr}}$ with a single-view GCL loss $\mathcal{L}_{\mathrm{COLES}}$ in Algorithm \ref{alg:gr}. In addition, we also need to normalize the final embedding $\mathbf{E}$ to match with our assumption. However, the naive normalization is known to be harmful to recommender system \cite{chung4leveraging}, i.e., force embeddings to reside on the surface of a unit hypersphere will cause dramatic performance degrade. Fortunately, there is also a regularization term to address this problem in \cite{chung4leveraging}. Given the user set $\mathcal{U}$ and the item set $\mathcal{I}$, they first calculate a sum of pairwise Gaussian potentials for each set homogeneously:
 \begin{equation}\label{eq:hom}
     \mathcal{L}_{\mathrm{hom}}=\sum_{x\in \mathcal{U}} \sum_{y\in \mathcal{U}} e^{-t||e_x - e_y||^2}+\sum_{x\in \mathcal{I}} \sum_{y\in \mathcal{I}} e^{-t||e_x - e_y||^2},
 \end{equation}
 where $t$ is a hyperparameter. Then, they also use the heterogeneous loss between positive and negative samples:
 \begin{equation}\label{eq:het}
     \mathcal{L}_{\mathrm{het}}=\sum_{u=1}^{n_u}\sum_{i\in \neighbor_u}\sum_{j\in \neighbor_u^-} e^{-t||e_i - e_j||^2}.
 \end{equation}
The previous study \cite{chung4leveraging} have demonstrated that the combination of Equation (\ref{eq:hom}) and (\ref{eq:het}) effectively mitigates the bias introduced by normalization. With this regulation term, we finally arrive our training objective:
\begin{equation} \label{eq:gr_coles}
    \mathcal{L}_{\mathrm{GR}_{\mathrm{COLES}}}=\mathcal{L}_{\mathrm{COLES}} + \mathcal{L}_{\mathrm{hom}} + \mathcal{L}_{\mathrm{het}}.
\end{equation}
Previous works regarded contrastive learning loss like $\mathcal{L}_{\mathrm{COLES}}$ as unsupervised, making it difficult to understand why Equation (\ref{eq:gr_coles}) can directly train a good GR model without fine-tuning. However, this becomes very natural with Theorem \ref{the:main}, because we know that $\mathcal{L}_{\mathrm{COLES}}$ and $\mathcal{L}_{\mathrm{BPR}}$ are equivalent. We replace $\mathcal{L}_{BPR}$ with $\mathcal{L}_{\mathrm{GR}_{\mathrm{COLES}}}$, and conducted experiments on multiple GR models:

\begin{itemize}
    \item LightGCN \cite{he2020lightgcn}: An early work of graph recommender and the most commonly-used backbone model. It initializes a learnable embedding lookup table and propagates it using the message-passing operation. Then the embeddings are optimized with the BPR loss.
    \item SGL \cite{wu2021sgl}: The GCL loss is adopted for jointly training with the BPR loss. We change the BPR loss to the single-view GCL loss and keep the original contrastive module, the model is dubbed SGL$_{\mathrm{COLES}}$.
    \item SimGCL \cite{yu2022simgcl}: It performs embedding augmentation instead of graph augmentation for efficiency. We change the BPR loss to the single-view GCL loss and keep the original contrastive module, the model is dubbed SimGCL$_{\mathrm{COLES}}$.
    \item XSimGCL \cite{yu2023xsimgcl}: It performs contrastive learning across different layer embeddings for further simplification. We change the BPR loss to the single-view GCL loss and keep the original contrastive module, the model is dubbed XSimGCL$_{\mathrm{COLES}}$.
\end{itemize}

We use three real-world datasets: Yelp2018 \cite{he2020lightgcn}, Amazon-kindle \cite{wu2021sgl} and Alibaba-iFashion \cite{wu2021sgl}. We report Recall@20 and NDCG@20 in Table \ref{tab:coles_replace_bpr}, these two metrics are commonly used in previous works \cite{he2020lightgcn, wu2021sgl, yu2023xsimgcl}. It is clear that the single-view GCL loss works well in GR. This finding is novel because previous research believed that GCL could only be trained in conjunction with the BPR loss, and even contrastive pre-training was considered challenging \cite{yang2024cptpp}. However, our study demonstrates for the first time that GCL can be used as a standalone downstream task in graph recommender systems. This finding also enables many research studies on GCL to directly apply to GR, such as scalability \cite{zhang2023structcomp} and negative sample mining \cite{xia2022progcl}. We leave these aspects for future work.

\subsection{Additional Experiments} \label{sec:add_exp}

\textbf{Self-loop and average.} As mentioned in Section \ref{sec:encoder}, the LightGCN encoder uses the average embeddings of all layers instead of just the final embedding. Despite the fact that it does not add self-loop into the adjacency matrix, nodes in LightGCN actually receives more information from the inputs than the vanilla GCN. Here, we empirically investigate these two options of architecture. We add self-loop into the adjacency matrix and take only the final embeddings as the output, this model is dubbed as LightGCN$_{\mathrm{Selfloop}}$. We compare the original average embedding LightGCN with the modified version. The results are presented in Table \ref{tab:self_loop}. We observe that two encoders produce similar performance. This is predictable since they capture same information just with different weights. Additionally, LightGCN$_{\mathrm{Selfloop}}$ performs better on the Yelp2018 dataset, while the original LightGCN performs better on Amazon-Kindle and Alibaba-iFashion. We believe the reason for this is that Yelp2018 is with a significantly larger density (0.13\%, while Amazon-Kindle has 0.014\% and Alibaba-iFashion 0.007\%). The rich neighborhood information is better captured with the LightGCN$_{\mathrm{Selfloop}}$ according to our analysis in Section \ref{sec:encoder}, since each node has less relative influence on itself in this encoder. 

\begin{table}[h]
    \centering
    \caption{Evaluation of LightGCN$_{\mathrm{Selfloop}}$ and the original LightGCN. We use Recall@20 and NDCG@20 as the metric. The better results are bolded.}
    \begin{tabular}{cccc}
    \toprule
     \textbf{Dataset}    &  \textbf{Metric} & \textbf{LightGCN$_{\mathrm{Selfloop}}$} & \textbf{LightGCN}\\
     \midrule
     \multirow{2}{*}{\textbf{Yelp2018}}    & \textbf{Recall@20} & \textbf{0.0642} & 0.0639\\
     & \textbf{NDCG@20} &\textbf{0.0528} & 0.0525\\
    \multirow{2}{*}{\textbf{Amazon-Kindle}}& \textbf{Recall@20} &0.2023 & \textbf{0.2057}\\
    & \textbf{NDCG@20} & \textbf{0.1317} & 0.1315\\
    \multirow{2}{*}{\textbf{Alibaba-iFashion}}& \textbf{Recall@20} & 0.0940 &\textbf{0.0955}\\
    & \textbf{NDCG@20} & 0.0450 & \textbf{0.0461}\\
    \bottomrule
    \end{tabular}
    \label{tab:self_loop}
\end{table}

\noindent\textbf{Sensitivity analysis.} We conduct sensitivity analysis on the negative coefficient $\beta$ used in GR experiments. We present the result on Yelp2018 in Figure \ref{fig:hyper}. We observe that $\beta$ does not significantly affect the performance of GR in reasonable range.

\begin{figure}
    \centering
    \includegraphics[width=0.45\textwidth]{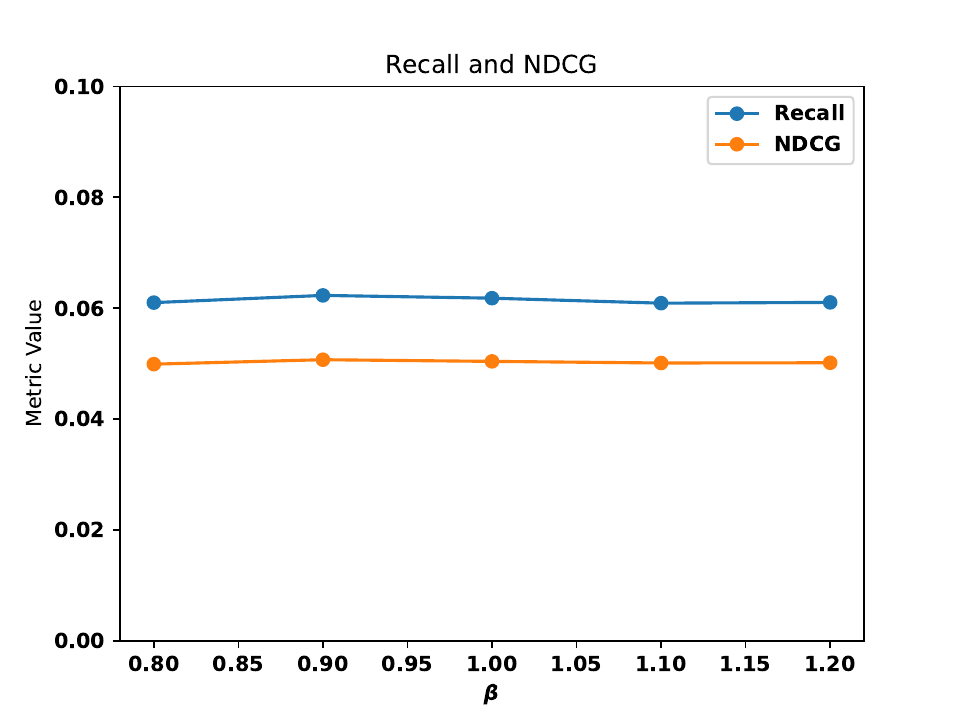}
    \caption{Recall@20 and NDCG@20 on the Yelp2018 dataset with varying $\beta$.}
    \label{fig:hyper}
\end{figure}

\noindent\textbf{Ablation study.} In Section \ref{sec:gcl4gr}, we use the regulation terms in \cite{chung4leveraging} to mitigate the bias introduced by normalization. We study how these terms affect GR models by conducting ablation study. The results are presented in Table \ref{tab:ablation}. We observe that the regulation terms indeed mitigate the bias, since the performance drops dramatically after the regulation terms are removed. This result further verifies the effectiveness of \cite{chung4leveraging}. We also observe that GR model trained with only these regulation terms fails to get a meaningful result. Thus it is safe to conclude that $\mathcal{L}_{\mathrm{COLES}}$ is the main recommendation loss and $\mathcal{L}_{hom}+\mathcal{L}_{het}$ is only for debiasing.

\begin{table}[h]
    \centering
    \caption{The ablation study on the Yelp2018 dataset. LightGCN$_{\mathrm{COLES}}$ is the LightGCN model trained with Equation \ref{eq:gr_coles}. $-$Regulation is trained with only the single-view GCL loss $\mathcal{L}_{\mathrm{COLES}}$. $-$COLES is trained with only the regulation terms $\mathcal{L}_{\mathrm{hom}}+\mathcal{L}_{\mathrm{het}}$.}
    \begin{tabular}{lcc}
    \toprule
         &  Recall@20 & NDCG@20\\
    \midrule
     LightGCN$_{\mathrm{COLES}}$    & 0.0623 & 0.0507 \\
     $-$Regulation & 0.0301 & 0.0249\\
     $-$COLES& 0.0041 & 0.0033\\
    \bottomrule
    \end{tabular}
    
    \label{tab:ablation}
\end{table}

\subsection{Experimental details}

Experiments are conducted on a server with 80 GB NVIDIA A100 GPU. For the node classification task, we use the code repository of SCE\footnote{\url{https://github.com/szzhang17/Sparsest-Cut-Network-Embedding}}, COLES\footnote{\url{https://github.com/allenhaozhu/COLES}} and SP-GCL \footnote{\url{https://github.com/haonan3/SPGCL}}. For the unsupervised GraphSAGE, we use the example code from torch-geometric \footnote{\url{https://github.com/pyg-team/pytorch_geometric/blob/master/examples/graph_sage_unsup.py}}. We also use the repository of GRACE\footnote{\url{https://github.com/CRIPAC-DIG/GRACE}}, GCA\footnote{\url{https://github.com/CRIPAC-DIG/GCA}}, CCA-SSG\footnote{\url{https://github.com/hengruizhang98/CCA-SSG}}, GGD\footnote{\url{https://github.com/zyzisastudyreallyhardguy/Graph-Group-Discrimination}} and PolyGCL\footnote{\url{https://github.com/ChenJY-Count/PolyGCL}}. For the graph recommender task, we use SELFREC\footnote{\url{https://github.com/Coder-Yu/SELFRec}},  a comprehensive framework for self-supervised and graph recommendation.

We adhered closely to the setting employed in prior studies. Specifically, we use 2-layers GCN for all node classification tasks. We use 3-layers models for GR since they preform best in most of datasets. We set the number of hidden dimension to 512 for GCLs and 64 for GRs. We use the Adam \cite{kingma2014adam} optimizer without weight decay, the learning rates for GCLs are tuned from \{5e-4, 1e-3, 5e-3, 1e-2\} and for GRs are fixed as 1e-2. For GCLs, we use linear model in scikit-learn\footnote{\url{https://scikit-learn.org/stable/modules/linear_model.html}} to predict the downstream tasks. For GR, it is conventional to use mini-batch gradient descent, so we adpot this setting and use 2048 as the batch size. We adopt the default choice from previous papers and set $t=2$ and $\beta=0.9$.
\section{Conclusion}

In this paper, we establish a connection between the fields of GCL and GR. With mild assumptions, we theoretically demonstrate the equivalence between GCL and GR in terms of both encoders and loss functions. This finding not only helps explain the phenomenon observed in previous experiments but also inspires new research directions. Through extensive experiments, we show the interchangeability of GCL and GR losses. The effectiveness of off-the-shelf GCL loss in training GR models is particularly astonishing, as prior work typically considered GCL loss to be unsupervised and required joint training with BPR loss in recommendation tasks. We also provide examples of new research directions that arise from our discoveries.

\newpage
\bibliographystyle{ACM-Reference-Format}
\bibliography{sample-base}

\newpage
\appendix

\section{Proofs} \label{app:proofs}

\begin{lemma}
    The trace of the quadratic form of the Laplacian matrix is equal to the smoothness on the graph, i.e.,

    \begin{equation}
        \mathrm{Tr}(\mathbf{E}^T \mathbf{L} \mathbf{E})=\sum_{u=1}^{n_u}\sum_{i\in \mathcal{N}_u}||e_u-e_i||^2.
    \end{equation}
\end{lemma}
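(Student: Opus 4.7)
The plan is to prove the identity by reducing the matrix-valued quadratic form to scalar quadratic forms dimension-by-dimension, applying the standard Laplacian smoothness identity, and then specializing to the bipartite user-item graph.

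First, I would decompose the trace across embedding coordinates. Writing $\mathbf{E}=[\mathbf{e}^{(1)},\dots,\mathbf{e}^{(d)}]$ with $\mathbf{e}^{(k)}\in\mathbb{R}^n$ the $k$-th column, the cyclic/linear properties of trace give
\begin{equation}
\mathrm{Tr}(\mathbf{E}^T \mathbf{L} \mathbf{E}) \;=\; \sum_{k=1}^{d}\bigl(\mathbf{e}^{(k)}\bigr)^T \mathbf{L}\, \mathbf{e}^{(k)}.
\end{equation}
This reduces the claim to handling one scalar-valued quadratic form at a time.

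Next, I would invoke the standard Laplacian smoothness identity. Using $\mathbf{L}=\mathbf{D}-\adj$ and expanding,
\begin{equation}
\mathbf{x}^T \mathbf{L}\, \mathbf{x} \;=\; \sum_{v} d_v x_v^2 - \sum_{u,v} \adj_{uv} x_u x_v \;=\; \tfrac{1}{2}\sum_{u,v} \adj_{uv}(x_u - x_v)^2,
\end{equation}
where the first equality rewrites $\sum_v d_v x_v^2 = \tfrac{1}{2}\sum_{u,v}\adj_{uv}(x_u^2 + x_v^2)$ using the symmetry of $\adj$ and the identity $d_v = \sum_u \adj_{uv}$. Applying this to each column $\mathbf{e}^{(k)}$ and swapping the order of summation, the coordinate-wise squared differences accumulate into the full Euclidean norm, yielding
\begin{equation}
\mathrm{Tr}(\mathbf{E}^T \mathbf{L}\, \mathbf{E}) \;=\; \tfrac{1}{2}\sum_{u,v}\adj_{uv}\,\|e_u - e_v\|^2.
\end{equation}

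Finally, I would specialize to the bipartite block structure of $\adj$. Since $\adj_{uv}=1$ precisely when one endpoint is a user and the other is one of its item neighbors, and zero otherwise, each user-item edge $(u,i)$ contributes to the symmetric double sum exactly twice (once as $\adj_{ui}$ and once as $\adj_{iu}$). The factor $\tfrac{1}{2}$ therefore collapses the sum to a single enumeration over users and their item neighbors, giving the claimed right-hand side $\sum_{u=1}^{n_u}\sum_{i\in\neighbor_u}\|e_u-e_i\|^2$.

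I do not anticipate a genuine obstacle: this is a textbook manipulation. The only place where one should be careful is the bookkeeping of the factor $\tfrac{1}{2}$ against the bipartite double-counting; stating explicitly that $\adj_{uv}$ and $\adj_{vu}$ both equal $1$ on each edge (thanks to symmetry of the adjacency) makes the collapse to a one-sided sum unambiguous.
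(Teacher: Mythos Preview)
Your proposal is correct and follows essentially the same route as the paper: coordinate-wise expansion of the trace, the standard identity $\mathbf{x}^T\mathbf{L}\mathbf{x}=\tfrac{1}{2}\sum_{u,v}\adj_{uv}(x_u-x_v)^2$, and then collapsing the symmetric double sum to the one-sided user-item sum via the bipartite structure. The only cosmetic difference is that the paper works entry-by-entry with $\mathbf{L}_{ij}$ before substituting $\mathbf{L}=\mathbf{D}-\adj$, whereas you invoke the substitution first; both arrive at $\tfrac{1}{2}\sum_{u,v}\adj_{uv}\|e_u-e_v\|^2$ and finish identically.
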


\begin{proof}
    \begin{equation}
        \begin{split}
            \mathrm{Tr}(\mathbf{E}^T \mathbf{L} \mathbf{E}) &=\sum_{k=1}^d\sum_{i=1}^n\sum_{j=1}^n e_{i,k} e_{j,k}\mathbf{L}_{i,j}\\
            &=\sum_{k=1}^d(\sum_{i=1}^n e_{i,k}^2\mathbf{L}_{i,i}+\sum_{i\neq j} e_{i,k} e_{j,k}\mathbf{L}_{i,j})\\
            &=\frac{1}{2}\sum_{k=1}^d(\sum_{i=1}^n e_{i,k}^2\mathbf{L}_{i,i}+2\sum_{i\neq j} e_{i,k} e_{j,k}\mathbf{L}_{i,j}+\sum_{j=1}^n e_{j,k}^2\mathbf{L}_{j,j})\\
            &=\frac{1}{2}\sum_{k=1}^d\sum_{i\neq j}(e_{i,k}^2\adj_{i,j}-2e_{i,k} e_{j,k}\adj_{i,j}+e_{j,k}^2\adj_{i,j})\\
            &=\frac{1}{2}\sum_{k=1}^d\sum_{i\neq j}\adj_{i,j}(e_{i,k} - e_{j,k})^2\\
            &= \frac{1}{2}\sum_{i=1}^n\sum_{j=1}^n \adj_{i,j} ||e_i - e_j||^2\\
            &=\sum_{u=1}^{n_u}\sum_{i\in \mathcal{N}_u}||e_u-e_i||^2.
        \end{split}
    \end{equation}
\end{proof}

\begin{theorem}
With the normalized embeddings, we have the following equivalence:
\begin{equation}
    \begin{split}
        \frac{K}{2}\mathcal{L}_{\mathrm{COLES}}^+  -\frac{d_{\mathrm{min}}}{2}\mathcal{L}_{\mathrm{COLES}}^- +d_{\mathrm{min}}Kn_u - mK \leq \mathcal{L}_{\mathrm{BPR}} \\
        \leq \frac{K}{2}\mathcal{L}_{\mathrm{COLES}}^+ - \frac{d_{\mathrm{max}}}{4}\ln(\frac{2e^2}{e^2+1})\mathcal{L}_{\mathrm{COLES}}^- + \mathrm{constant},
    \end{split}
\end{equation}
where $\mathrm{constant}=d_{\mathrm{max}}K n_u\ln(\frac{2e^3+2e}{e^2+1}) - mK$.
\end{theorem}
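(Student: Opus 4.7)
First, I would use the normalization assumption to rewrite $\hat y_{ab}=e_a^Te_b=1-\tfrac{1}{2}\|e_a-e_b\|^2$ and apply Lemma~\ref{lem:trace} to both $\mathbf{L}$ and $\mathbf{L}^-$, which yields the affine identities $\mathcal{L}_{\mathrm{COLES}}^+=2m-2S^+$ and $\mathcal{L}_{\mathrm{COLES}}^-=2Kn_u-2S^-$ where $S^+$ and $S^-$ denote the two scalar aggregates $\sum_{u,i\in\mathcal{N}_u}\hat y_{ui}$ and $\sum_{u,j\in\mathcal{N}_u^-}\hat y_{uj}$. Substituting these into the theorem shows that the desired sandwich is equivalent to bounding $\mathcal{L}_{\mathrm{BPR}}$ above and below by explicit linear combinations of $S^+$ and $S^-$ with the advertised degree-dependent and logarithmic constants, which concentrates the whole bookkeeping in one place before any hard estimate is attempted.

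For the lower bound I would use the universal inequality $\ln(1+e^y)\ge y$ (from $1+e^y\ge e^y$), applied per triple with $y=\hat y_{uj}-\hat y_{ui}$. Summing gives $\mathcal{L}_{\mathrm{BPR}}\ge\sum_u d_u\sum_j\hat y_{uj}-K\,S^+$. The subtle point is that the first sum carries the user-specific weight $d_u$ whereas the target uses the uniform constant $d_{\min}$; I would dispatch this via an \emph{offset trick} based on normalization: because $\hat y_{uj}\ge -1$, the per-user quantity $\sum_{j\in\mathcal{N}_u^-}\hat y_{uj}+K$ is nonnegative, so $d_u\ge d_{\min}$ yields $\sum_u d_u(\sum_j\hat y_{uj}+K)\ge d_{\min}(S^-+Kn_u)$. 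Subtracting the auxiliary $K\sum_u d_u=Km$ converts the degree-weighted sum into $d_{\min}S^-+d_{\min}Kn_u-Km$, and re-expressing in $\mathcal{L}_{\mathrm{COLES}}^\pm$ produces the claimed lower bound.

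For the upper bound I would set $p_{ui}=\|e_u-e_i\|^2\in[0,4]$ and $q_{uj}=\|e_u-e_j\|^2\in[0,4]$ and exploit the structural fact that $h(p,q)=\ln(1+e^{(p-q)/2})$ is jointly convex on $[0,4]^2$, being the composition of the convex scalar function $\ln(1+e^{\cdot})$ with a linear map. Consequently any affine function that dominates $h$ at the four vertices of the box dominates it everywhere. Probing a majorant of the form $\tfrac{p}{2}-Aq+C$, the corner $(0,0)$ forces $C\ge\ln 2$ and then the corner $(0,4)$ forces $4A\le\ln\tfrac{2}{1+e^{-2}}=\ln\tfrac{2e^2}{e^2+1}$; the corners $(4,0)$ and $(4,4)$ are redundant. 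Summing this per-triple inequality over $(u,i,j)$ gives $\mathcal{L}_{\mathrm{BPR}}\le\tfrac{K}{2}\mathcal{L}_{\mathrm{COLES}}^+-A\sum_u d_u\sum_j q_{uj}+mK\ln 2$. I would then apply the companion offset trick in the opposite direction: since $q_{uj}-4\le 0$ and $d_u\le d_{\max}$, $\sum_u d_u\sum_j(q_{uj}-4)\ge d_{\max}(\mathcal{L}_{\mathrm{COLES}}^--4Kn_u)$, which promotes the degree to $d_{\max}$ and, after collecting residual constants with $mK\ln 2$ and noting $\tfrac{2e^3+2e}{e^2+1}=2e$, produces the stated upper bound.

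I expect the main obstacle to be \emph{sign-matching} in the two offset tricks: the nonnegativity $\hat y_{uj}+1\ge 0$ must be paired with $d_u\ge d_{\min}$ in the lower bound, while the nonpositivity $q_{uj}-4\le 0$ must be paired with $d_u\le d_{\max}$ in the upper bound; swapping either pairing reverses the inequality and kills the bound. The reduction of an upper bound on a convex function over $[0,4]^2$ to a four-corner check is the only conceptually new step—everything else is careful linear bookkeeping to make the residual constants telescope into the advertised $d_{\min}Kn_u-mK$ and $d_{\max}Kn_u\ln(2e)-mK$.
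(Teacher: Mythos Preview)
Your overall strategy is sound and in places more rigorous than the paper's, but your claim that the constants ``telescope into the advertised'' ones is incorrect on both sides; the bounds you actually derive have different additive constants from those stated.

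\textbf{Lower bound.} Your offset trick yields $\sum_u d_u\sum_j \hat y_{uj}\ge d_{\min}S^-+d_{\min}Kn_u-Km$; combining with $-KS^+=\tfrac{K}{2}\mathcal{L}_{\mathrm{COLES}}^+-Km$ and $d_{\min}S^-=d_{\min}Kn_u-\tfrac{d_{\min}}{2}\mathcal{L}_{\mathrm{COLES}}^-$ gives the constant $2d_{\min}Kn_u-2Km$, not the stated $d_{\min}Kn_u-mK$. The paper does not use an offset at all: it simply writes $\sum_u d_u\sum_j \hat y_{uj}\ge d_{\min}\sum_u\sum_j \hat y_{uj}$ and converts directly. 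Your caution about the sign is well placed---the paper's step is not justified when some $\sum_j\hat y_{uj}$ is negative---but the more careful route you take does not reproduce the exact constant in the theorem.

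\textbf{Upper bound.} Your 2D corner-check gives the per-triple majorant $\tfrac{p}{2}-Aq+\ln 2$ with $A=\tfrac14\ln\tfrac{2e^2}{e^2+1}$; after your second offset the constant becomes $d_{\max}Kn_u\ln\tfrac{2e^2}{e^2+1}+mK\ln\tfrac{e^2+1}{e^2}$, which is \emph{smaller} than (hence implies, but is not equal to) the stated $d_{\max}Kn_u\ln(2e)-mK$. The paper takes a simpler route that avoids any offset: it first uses $\hat y_{ui}\le 1$ to bound $\ln(e^{\hat y_{ui}}+e^{\hat y_{uj}})\le\ln(e+e^{\hat y_{uj}})$, so the $i$-sum collapses to the factor $d_u$ multiplying a \emph{positive} quantity, and $d_u\le d_{\max}$ applies directly. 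Then the 1D secant of the convex function $x\mapsto\ln(e+e^{x})$ on $[-1,1]$, namely $\tfrac12\ln\tfrac{2e^2}{e^2+1}(x+1)+\ln(e+e^{-1})$, gives exactly the advertised constants.

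In short: your argument is correct and arguably cleaner (the joint convexity check and the offset tricks are nice), but it proves a slightly different pair of inequalities; to recover the statement verbatim you should follow the paper's two shortcuts---drop $\hat y_{ui}$ via $\hat y_{ui}\le 1$ before upper-bounding, and substitute $d_u\to d_{\min}$ directly in the lower bound.
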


\begin{proof}
    \begin{equation} \label{eq:bpr_de}
        \begin{split}
            \mathcal{L}_{\mathrm{BPR}}&=-\sum_{u=1}^{n_u}\sum_{i\in \neighbor_u}\sum_{j\in \neighbor_u^-}\ln \left(\frac{1}{1+\exp(\hat{y}_{uj}-\hat{y}_{ui})} \right)\\
            &=-\sum_{u=1}^{n_u}\sum_{i\in \neighbor_u}\sum_{j\in \neighbor_u^-}\ln \left(\frac{\exp(\hat{y}_{ui})}{\exp(\hat{y}_{ui})+\exp(\hat{y}_{uj})} \right)\\
            &=-K\sum_{u=1}^{n_u}\sum_{i\in \neighbor_u} \hat{y}_{ui}+\sum_{u=1}^{n_u}\sum_{i\in \neighbor_u}\sum_{j\in \neighbor_u^-}\ln(e^{\hat{y}_{ui}}+e^{\hat{y}_{uj}})\\
            &=\mathcal{L}_{\mathrm{BPR}}^+ + \mathcal{L}_{\mathrm{BPR}}^-,
        \end{split}
    \end{equation}
    where we denote $\mathcal{L}_{\mathrm{BPR}}^+=-K\sum_{u=1}^{n_u}\sum_{i\in \neighbor_u} \hat{y}_{ui}$ as the positive part of the BPR loss, and $\mathcal{L}_{\mathrm{BPR}}^-=\sum_{u=1}^{n_u}\sum_{i\in \neighbor_u}\sum_{j\in \neighbor_u^-}\ln(e^{\hat{y}_{ui}}+e^{\hat{y}_{uj}})$ as the negative part of the BPR loss. Then, we have:
    \begin{equation}\label{eq:bpr_pos}
        \begin{split}
            \mathcal{L}_{\mathrm{BPR}}^+ &=-K\sum_{u=1}^{n_u}\sum_{i\in \neighbor_u} \hat{y}_{ui}\\
            &= \frac{K}{2}\sum_{u=1}^{n_u}\sum_{i\in \neighbor_u} (||e_u - e_i||^2 - 2)\\
            &= \frac{K}{2}\mathcal{L}_{\mathrm{COLES}}^+ - mK.
        \end{split}
    \end{equation}
    For the negative part, we have:
    \begin{equation}\label{eq:bpr_neg1}
        \begin{split}
            \mathcal{L}_{\mathrm{BPR}}^- &\geq \sum_{u=1}^{n_u}d_u\sum_{j\in \neighbor_u^-}\hat{y}_{uj}\\
            &\geq d_{\mathrm{min}}\sum_{u=1}^{n_u}\sum_{j\in \neighbor_u^-}\hat{y}_{uj}\\
            &=-\frac{d_{\mathrm{min}}}{2}\sum_{u=1}^{n_u}\sum_{j\in \neighbor_u^-} (||e_u - e_j||^2 - 2)\\
            &= -\frac{d_{\mathrm{min}}}{2}\mathcal{L}_{\mathrm{COLES}}^- +d_{\mathrm{min}}Kn_u,
        \end{split}
    \end{equation}
    and,
    \begin{equation}\label{eq:bpr_neg2}
        \begin{split}
            \mathcal{L}_{\mathrm{BPR}}^- &\leq
            d_{\mathrm{max}}\sum_{u=1}^{n_u}\sum_{j\in \neighbor_u^-}\ln(e+e^{\hat{y}_{uj}})\\
            &\leq d_{\mathrm{max}}\sum_{u=1}^{n_u}\sum_{j\in \neighbor_u^-} \left(\frac{1}{2}\ln(\frac{2e^2}{e^2+1})(\hat{y}_{uj}+1)+\ln(e+\frac{1}{e}) \right)\\
            &=\frac{d_{\mathrm{max}}}{2}\ln(\frac{2e^2}{e^2+1})(2K n_u-\frac{\mathcal{L}_{\mathrm{COLES}}^-}{2})+d_{\mathrm{max}}K n_u\ln(e+\frac{1}{e}).
        \end{split}
    \end{equation}
    By plugging equations (\ref{eq:bpr_pos}), (\ref{eq:bpr_neg1}), and (\ref{eq:bpr_neg2}) back into equation (\ref{eq:bpr_de}), we obtain the desired inequality.
\end{proof}

\end{document}